\newtheorem{theorem}{Theorem}[section]
\newtheorem{corollary}[theorem]{Corollary}
\newtheorem{definition}[theorem]{Definition}
\newtheorem{lemma}[theorem]{Lemma}
\newtheorem{remark}[theorem]{Remark}
\newcommand{\LL}{\mathcal{L}}
\newcommand{\QQ}{\mathcal{Q}}
\newcommand{\PP}{\mathcal{P}}
\newcommand{\BB}{\mathcal{B}}
\newcommand{\RR}{\mathcal{R}}
\newcommand{\FIN}{\operatorname{FIN}}
\renewcommand{\FIN}{\mathbb{F}}
\newcommand{\real}{{\mathbb{R}}}
\newcommand{\subscr}[2]{{#1}_{\textup{#2}}}
\newcommand{\env}{\mathcal{E}}
\newcommand{\p}{\mathbf{p}}
\newcommand{\q}{\mathbf{q}}
\newcommand{\f}{\mathbf{f}}
\newcommand{\e}{\mathrm{e}}
\newcommand{\s}{\mathbf{s}}
\newcommand{\btsp}{\beta_{\mathrm{TSP}}}
\newcommand{\etsp}{\textup{ETSP}}
\newcommand{\emhp}{\textup{EMHP}}
\newcommand{\tmhp}{\textup{TMHP}}
\newcommand{\map}[3]{#1: #2 \rightarrow #3}
\newcommand{\expectation}[1]{\mbox{$\mathbb{E}\left[#1\right]$}} 
\newcommand{\prob}{\mathbb{P}}
\newcommand{\convertmap}{\operatorname{g}}
\newcommand{\capt}{\textup{capt}}
\newcommand{\esc}{\textup{esc}}
\newcommand{\erf}{\operatorname{erf}}
\newcommand{\capfrac}{\subscr{\mathbb{F}}{cap}}
\newcommand\oprocendsymbol{\hbox{$\square$}}
\newcommand\oprocend{\relax\ifmmode\else\unskip\hfill\fi\oprocendsymbol}
\title{A Dynamic Boundary Guarding Problem with Translating
  Targets\thanks{This material is based upon work supported in part by
    ARO-MURI Award W911NF-05-1-0219 and ONR Award N00014-07-1-0721.}}
\author{Stephen L. Smith\thanks{S. L. Smith, S. D. Bopardikar, and
    F. Bullo are all with the Center for Control, Dynamical Systems and
    Computation, University of California at Santa Barbara, Santa Barbara,
    CA 93106, USA. Email:
    \texttt{\{stephen,shaunak,bullo\}@engineering.ucsb.edu.}} \quad
  Shaunak D. Bopardikar \quad
  Francesco Bullo}
\begin{document}
\maketitle
\begin{abstract}
  We introduce a problem in which a service vehicle seeks to guard a
  deadline (boundary) from dynamically arriving mobile targets. The
  environment is a rectangle and the deadline is one of its edges.
  Targets arrive continuously over time on the edge opposite the
  deadline, and move towards the deadline at a fixed speed. The goal
  for the vehicle is to maximize the fraction of targets that are
  captured before reaching the deadline. We consider two cases; when
  the service vehicle is faster than the targets, and; when the
  service vehicle is slower than the targets. In the first case we
  develop a novel vehicle policy based on computing longest paths in a
  directed acyclic graph. We give a lower bound on the capture
  fraction of the policy and show that the policy is optimal when the
  distance between the target arrival edge and deadline becomes very
  large. We present numerical results which suggest near optimal
  performance away from this limiting regime. In the second case, when
  the targets are slower than the vehicle, we propose a policy based
  on servicing fractions of the translational minimum Hamiltonian
  path. In the limit of low target speed and high arrival rate, the
  capture fraction of this policy is within a small constant factor of
  the optimal. \end{abstract}

\section{Introduction}

Vehicle motion planning in dynamic environments arises in many
important autonomous vehicle applications.  In areas such as
environmental monitoring, surveillance and perimeter defence, the
vehicle must re-plan its motion as it acquires information on its
surroundings.  In addition, remote operators may add tasks to, or
remove tasks from, the vehicle's mission in real-time.  In this paper
we consider a problem in which a vehicle must defend a boundary in a
dynamic environment with approaching targets.

Static vehicle routing problems consider planning a path through a
fixed number of locations. Examples include the traveling salesperson
problem (TSP)~\cite{NC:75}, the deadline-TSP and vehicle routing with
time-windows~\cite{NB-AB-DK-AM:04}. Recently, researchers have looked
at the TSP with moving objects. In~\cite{PC-RM:99} the authors
consider objects moving on straight lines and focus on the case when
the objects are slower than the vehicle and when the vehicle moves
parallel to the $x$- or $y$-axis. The same problem is studied
in~\cite{MH-BJN:02}, but with arbitrary vehicle motion, and it is
called the translational TSP. The authors of~\cite{MH-BJN:02} propose
a polynomial-time approximation scheme to catch all objects in minimum
time. Other variations of the problem are studied
in~\cite{CSH-GR-AZ:03} and~\cite{YA-EM-SS:08}.

Dynamic vehicle routing (DVR) is a class of problems in which vehicles
must plan paths through service demand locations that arrive
sequentially over time. An early DVR problem was the dynamic traveling
repairperson problem~\cite{HNP:88,DJS-GJvR:91}, where each demand
assumes a fixed location upon arrival, and the vehicle must spend some
amount of on-site service time at each location. This problem has also
been studied from the online algorithm
perspective~\cite{AB-REY:98,SOK-WEP-DP-LS:03}. Other recent DVR
problems include DVR with demands that disappear if left unserviced
for a certain amount of time~\cite{MP-NB-EF-VI:08}, and demands with
different priority levels~\cite{SLS-MP-FB-EF:09a}. In our earlier
work~\cite{SDB-SLS-FB:08v}, we introduced a DVR problem in which
demands arrive on a line segment and move in a perpendicular direction
at a fixed speed slower than the vehicle. We derived conditions on the
demand arrival rate and demand speed for the existence of a vehicle
routing policy which can serve all demands, and a proposed a policy
based on the translational minimum Hamiltonian path.

\paragraph*{Contributions}
In this paper we introduce the following problem (see
Fig.~\ref{fig:problem_setup}): Targets (or demands) arrive according
to a stochastic process on a line segment of length $W$.  Upon arrival
the demands move with fixed speed $v$ towards a deadline which is at a
distance $L$ from the generator. A unit speed service vehicle seeks to
capture the demands before they reach the deadline (i.e., within $L/v$
time units of being generated).  The performance metric is the
fraction of demands that are captured before reaching the deadline.
\begin{figure}
\includegraphics[width=.55\linewidth]{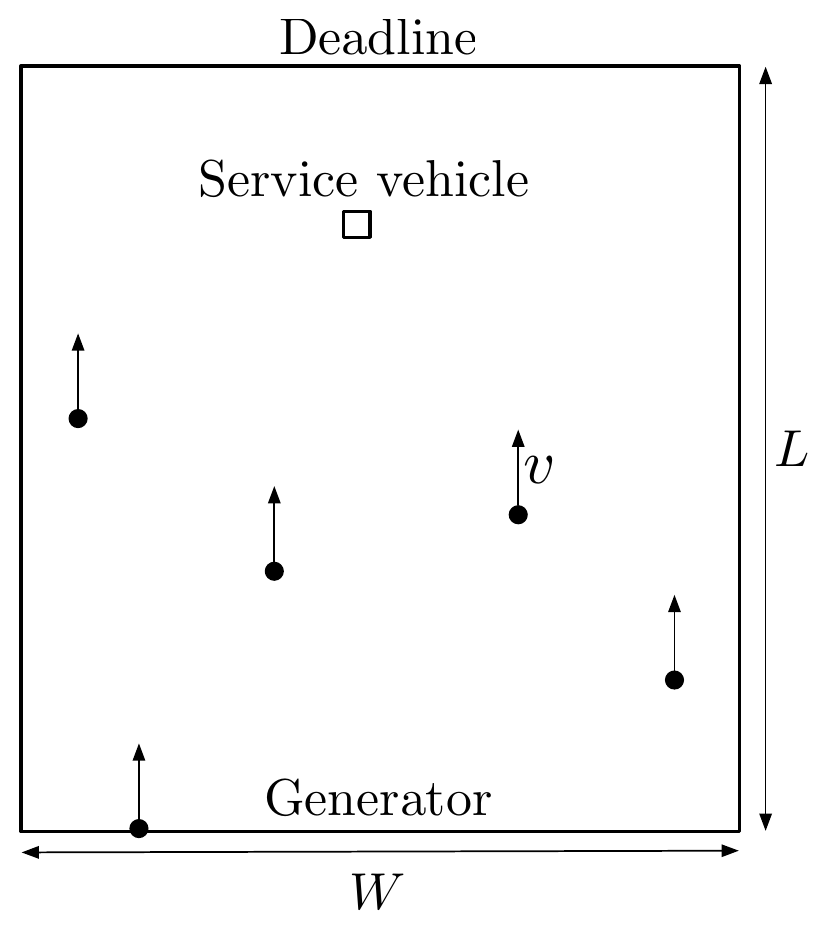}
\centering 
\caption{The problem setup.  Demands are shown as black disks
  approaching the deadline at speed $v$. The service vehicle is a
  square.}
\label{fig:problem_setup}
\end{figure}

We assume that the arrival process is uniform along the line segment
and temporally Poisson with rate $\lambda$.  In the case when the
demands are faster than the service vehicle (i.e., $v \geq 1$) we
introduce the novel Longest Path policy, which is based on computing
longest paths in a directed acyclic \emph{reachability graph}. When $L
\geq vW$, we derive a lower bound on the capture fraction as a
function of the system parameters. We show that the Longest Path
policy is the optimal policy when $L$ is much greater than $vW$. In
the case when the demands are slower than the service vehicle (i.e,
$v<1$), we propose a policy based on the translational minimum
Hamiltonian path called the TMHP-fraction policy. In the limit of low
demand speed and high arrival rate, the capture fraction of this
policy is within a small constant factor of the optimal. We present
numerical simulations which verify our results, and show that the
Longest Path policy performs very near the optimal even when $L < vW$.

The paper is organized as follows. In Section~\ref{sec:prob_form} we
formulate the problem and in Section~\ref{sec:prelims} we review some
background material. In Section~\ref{sec:fast_demand} we consider the
case of $v \geq 1$ and introduce the Longest Path Policy. In
Section~\ref{sec:slow_demand} we study $v <1$ and introduce the
TMHP-fraction policy. Finally, in Section~\ref{sec:simu} we present
simulations results.

\section{Problem Formulation}
\label{sec:prob_form}

Consider an environment $\env:=[0,W]\times[0,L] \subset \real^2$ as
shown in Figure~\ref{fig:problem_setup}.  The line segment
$[0,W]\times \{0\}\subset \env$ is termed the \emph{generator}, and
the segment $[0,W]\times \{L\}\subset \env$ is termed the
\emph{deadline}.  The environment contains a single vehicle with
position $\p(t) = [X(t), Y(t)]^T \in \env$, modeled as a first-order
integrator with unit speed.  Demands (or targets) arrive in the
environment according to a temporal Poisson process with rate $\lambda
> 0$.  Upon arrival, each demand assumes a uniformly distributed
location on the generator, and then moves with constant speed $v >0$
in the positive $y$-direction towards the deadline.  If the vehicle
intercepts a demand before the demand reaches the deadline, then the
demand is captured.  On the other hand, if the demand reaches the
deadline before being intercepted by the vehicle, then the demand
escapes.  Thus, to capture a demand, it must be intercepted within
$L/v$ time units of being generated.

We let $\QQ(t) \subset \env$ denote the set of all outstanding demand
locations at time $t$.  If the $i$th demand to arrive is captured,
then it is removed from $\QQ$ and placed in the set $\QQ_{\capt}$ with
cardinality $n_{\capt}$.  If the $i$th demand escapes, then it is
removed from $\QQ$ and placed in $\QQ_{\esc}$ with cardinality
$n_{\esc}$.
\

\paragraph*{Causal Policy}
A causal feedback control policy for the vehicle is a map
$\map{\PP}{\env\times \FIN(\env)}{\real^2}$, where $\FIN(\env)$ is the
set of finite subsets of $\env$, assigning a commanded velocity to the
service vehicle as a function of the current state of the system:
$\dot{\p}(t)=\PP(\p(t),\QQ(t))$.

\paragraph*{Non-causal Policy}
In a non-causal feedback control policy the commanded velocity of the
service vehicle is a function of the current and future state of the
system.  Such policies are not physically realizable, but they will
prove useful in the upcoming analysis.

Formally, let the generation of demands commence at time $t=0$, and
consider the sequence of demands $(q_1,q_2,\ldots)$ arriving at
increasing times $(t_1,t_2,\ldots)$, with $x$-coordinates
$(x_1,x_2,\ldots)$.  We can also model the arrival process by assuming
that at time $t=0$, all demands are located in
$[0,W]\times(-\infty,0]$, move in the $y$-direction at speed $v$ for
all $t > 0$, and are revealed to the service vehicle when they cross
the generator.  Thus, at time $t=0$, the position of the $i$th demand
is $(x_i,v(t-t_i))$.  We can define a set containing the position of
all demands in the region $[0,W]\times(-\infty,0]$ at time $t$ as
$\QQ_{\mathrm{unarrived}}(t)$.  Then, a non-causal policy is one for
which $\dot{\p}(t)=\PP(\p(t),\QQ(t)\cup\QQ_{\mathrm{unarrived}}(t))$.

\paragraph*{Problem Statement}
The goal in this paper is to find causal policies $P$ that maximize
the fraction of demands that are captured $\capfrac(P)$, termed the
\emph{capture fraction}, where
\[
\capfrac(P) :=
\limsup_{t\to+\infty}\expectation{\frac{n_{\capt}(t)}{n_{\capt}(t)+n_{\esc}(t)}}.
\]

\section{Preliminary Combinatorial Results}
\label{sec:prelims}

We now review the longest path problem, the distribution of demands in
an unserviced region, and optimal tours/paths through a set of points.

\subsection{Longest Paths in Directed Acyclic Graphs}
\label{sec:DAG_longest}

A directed graph $G=(V,E)$ consists of a set of vertices $V$ and a set
of directed edges $E\subset V\times V$.  An edge $(v,w)\in E$ is
directed from vertex $v$ to vertex $w$.  A \emph{path} in $G$ is a
sequence of vertices such that from each vertex in the sequence, there
is an edge in $E$ directed to the next vertex in the sequence.  A path
is \emph{simple} if it contains no repeated vertices. A \emph{cycle}
is a path in which the first and last vertex in the sequence are the
same.  A graph $G$ is \emph{acyclic} if it contains no cycles.  The
longest path problem is to find a simple path of maximum length (i.e.,
a path that visits a maximum number of vertices).  In general this
problem is NP-hard as its solution would imply a solution to the
well known Hamiltonian path problem~\cite{BK-JV:07}.  However, if the
graph is a DAG, then the longest path problem has an efficient dynamic
programming solution~\cite{NC:75} with complexity $O(|V| +|E|)$, that
relies on topologically sorting~\cite{THC-CEL-RLR-CS:01} the vertices.

\subsection{Distribution of Demands in an Unserviced Region}
\label{sec:poisson_dist}

Demands arrive uniformly on the generator, according to a Poisson
process with rate $\lambda$.  The following lemma describes the
distribution of demands in an unserviced region.  For a finite set
$\QQ$, we let $|\QQ|$ denote its cardinality.

\begin{lemma}[Distribution of outstanding demands,
  \cite{SDB-SLS-FB:08v}]
  \label{lem:poisson}
  
  Suppose the generation of demands commences at time $0$ and no demands
  are serviced in the interval $[0, t]$.  Let $\QQ$ denote the set of all
  demands in $[0,W]\times [0,vt]$ at time $t$.  Then, given a measurable
  compact region $\RR$ of area $A$ contained in $[0,W]\times [0,vt]$,
  \[
  \prob[|\RR \cap \QQ| = n] = \frac{\mathrm{e}^{-\bar\lambda
      A}(\bar\lambda A)^n}{n!}, \quad \text{where }\bar \lambda :=
  \lambda/(v W).
  \]
\end{lemma}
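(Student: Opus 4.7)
The plan is to recognize that the arrival process on the generator is a space-time Poisson process, and that the ``no service'' assumption means the positions at time $t$ are a deterministic (measure-preserving up to a Jacobian) image of the arrival events. Then the Poisson mapping theorem does the work.

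First I would set up the space-time arrival process. Since arrivals are temporally Poisson with rate $\lambda$ on $[0,t]$ and the arrival $x$-coordinate is independently uniform on $[0,W]$, the set of pairs $\{(x_i, s_i)\}$ of arrival locations and arrival times forms a Poisson point process on $[0,W]\times[0,t]$ with constant intensity $\lambda/W$ per unit area. This is a standard consequence of the marking/coloring interpretation of the Poisson process.

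Next, I would change variables to the configuration at time $t$. Under the dynamics, an arrival at $(x_i,s_i)$ occupies position $(x_i, v(t-s_i))$ at time $t$, which is the map $\Phi(x,s) := (x, v(t-s))$ from $[0,W]\times[0,t]$ onto $[0,W]\times[0,vt]$. This map is a diffeomorphism with constant Jacobian determinant $v$. By the mapping theorem for Poisson processes, $\Phi$ pushes the arrival Poisson process forward to a Poisson point process on $[0,W]\times[0,vt]$ whose intensity is the original intensity divided by the Jacobian, i.e., $(\lambda/W)/v = \lambda/(vW) = \bar\lambda$ per unit area.

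Finally, for any measurable compact $\RR\subset[0,W]\times[0,vt]$ of area $A$, the number $|\RR\cap \QQ|$ of points of this Poisson process falling in $\RR$ has a Poisson distribution with mean $\bar\lambda A$, yielding the stated formula. The main (and only substantive) obstacle is justifying the application of the mapping theorem; once the arrivals are viewed as a space-time Poisson process and the dynamics are recognized as a bijection with constant Jacobian, the conclusion is immediate. The ``no demands serviced'' hypothesis is needed precisely so that no points are removed between the arrival process and the snapshot at time $t$.
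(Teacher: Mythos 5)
Your proof is correct. Note that the paper itself does not prove this lemma --- it is imported verbatim from the companion work \cite{SDB-SLS-FB:08v} --- so there is no in-paper argument to compare against; your route via the space-time Poisson process on $[0,W]\times[0,t]$ with intensity $\lambda/W$, followed by the mapping theorem applied to the bijection $(x,s)\mapsto(x,v(t-s))$ with constant Jacobian $v$, is the standard and essentially canonical derivation of the stated Poisson law with density $\bar\lambda=\lambda/(vW)$. The only implicit point worth flagging is that the conclusion requires $vt\leq L$ (or that escaped demands are still counted), since otherwise demands in the upper part of $[0,W]\times[0,vt]$ would have crossed the deadline and been removed from $\QQ$; the ``no demands serviced'' hypothesis alone does not rule this out, though in every application in the paper the region of interest lies inside the environment, so nothing is lost.
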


The previous lemma tells us that number of demands $N$ in an serviced
region of area $A$ is a Poisson distributed with parameter $\lambda
A/(vW)$.  In addition, conditioned on $N$, the demands are
independently and uniformly distributed.

\subsection{The Euclidean Shortest Path/Tour Problems}
\label{sec:ETSP_EMHP}

Given a set $\QQ$ of $n$ points in $\real^2$, the Euclidean traveling
salesperson problem (ETSP) is to find the minimum-length tour (i.e.,
cycle) of $\QQ$. Letting $\etsp(\QQ)$ denote the minimum length of a
tour of $\QQ$, we can state the following result.

\begin{theorem}[Length of ETSP tour, \cite{JB-JH-JH:59}]
  \label{thm:etsp_length}
  Consider a set $\QQ$ of $n$ points independently and uniformly
  distributed in a compact set $\env$ of area $|\env|$.  Then, there
  exists a constant $\btsp$ such that, with probability one,
\begin{equation}
\label{eq:tspd}
\lim_{n\rightarrow+\infty} \frac{\etsp(\QQ)}{\sqrt{n}} =
\beta_{\mathrm{TSP}} \sqrt{|\env|}.
\end{equation}
\end{theorem}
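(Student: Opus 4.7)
The plan is to follow the classical Beardwood--Halton--Hammersley strategy: establish a subadditivity relation for the expected ETSP length, combine with scaling to obtain convergence in mean, then upgrade to almost-sure convergence via a concentration argument. As a preliminary reduction, I would rescale coordinates so that $\env$ has unit area; since rescaling by a factor $\alpha$ multiplies tour length by $\alpha$ while preserving the distribution of the (rescaled) uniform points, it suffices to prove the statement when $|\env| = 1$, say $\env = [0,1]^2$, and then restore the $\sqrt{|\env|}$ factor at the end.

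Next, I would partition $\env$ into $k^2$ congruent subsquares $\env_1, \dots, \env_{k^2}$ of side $1/k$. Let $\QQ_n$ denote the set of $n$ iid uniform points in $\env$, and let $N_j := |\QQ_n \cap \env_j|$, so $(N_1, \dots, N_{k^2})$ is multinomial with means $n/k^2$ and, conditional on $N_j$, the points of $\QQ_n \cap \env_j$ are iid uniform in $\env_j$. Building a tour of $\QQ_n$ by concatenating the optimal subsquare tours along a fixed Hamilton path through the subsquare centers yields
\[
\etsp(\QQ_n) \le \sum_{j=1}^{k^2} \etsp(\QQ_n \cap \env_j) + C k
\]
for an absolute constant $C > 0$ (the concatenation overhead is the total sweep length plus one diameter per subsquare). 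Taking expectations, using the scaling relation $\expectation{\etsp(\QQ_n \cap \env_j) \mid N_j = m} = (1/k)\, \expectation{\etsp(\QQ_m)}$, and applying Jensen's inequality in the form $\expectation{\sqrt{N_j}} \le \sqrt{n}/k$ yields a subadditive bound for $g(n) := \expectation{\etsp(\QQ_n)}/\sqrt{n}$. A matching reverse inequality, obtained by noting that the restriction to $\env_j$ of any tour of $\QQ_n$ differs from the optimal tour of $\QQ_n \cap \env_j$ by only $O(k)$ boundary crossings (summed over $j$), combined with Fekete's lemma applied to this subadditive scheme, pins down $\lim_{n \to \infty} g(n) = \btsp$ for some $\btsp \in (0, \infty)$.

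Finally, to upgrade convergence in mean to almost-sure convergence I would apply Azuma--Hoeffding to the Doob martingale of $\etsp(\QQ_n)$ under the filtration that reveals the points one at a time: resampling a single point perturbs $\etsp$ by at most a constant multiple of $\diam(\env)$ (since the perturbed point can be spliced into the optimal tour at bounded extra cost), so the martingale increments are uniformly bounded. This gives exponential concentration of $\etsp(\QQ_n)/\sqrt{n}$ around $g(n)$, and Borel--Cantelli then yields the almost-sure limit \eqref{eq:tspd}. The main obstacle is balancing the $O(k)$ concatenation overhead against the fluctuations of the $N_j$ around $n/k^2$: both must be shown negligible on the $\sqrt{n}$ scale, which is accomplished by choosing $k = k(n) \to \infty$ at an appropriately slow rate so that the relative error in the subadditive bound vanishes in the limit.
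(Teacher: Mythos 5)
This theorem is quoted in the paper as a classical result of Beardwood, Halton and Hammersley and is not proved there, so there is no in-paper argument to compare against; your sketch must stand on its own. Its first two stages --- the scaling reduction and the subadditivity of the expected tour length via the $k\times k$ grid decomposition, with the $O(k)$ concatenation and boundary-patching corrections --- are the standard route and are essentially sound as a sketch, though note that a general compact $\env$ of area one is not $[0,1]^2$, so the reduction ``say $\env=[0,1]^2$'' silently skips the extension from the unit square to general regions (approximation of $\env$ by unions of grid squares), which is a real, if routine, part of the original argument.

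The genuine gap is in the concentration step. Resampling one point does change $\etsp$ by at most $O(\diam(\env))$, but Azuma--Hoeffding with $n$ uniformly bounded increments of size $c=O(1)$ only gives
\[
\prob\bigl[\,|\etsp(\QQ_n)-\expectation{\etsp(\QQ_n)}|\ge t\,\bigr]\le 2\,\e^{-t^2/(2nc^2)},
\]
and the relevant deviation scale is $t=\epsilon\sqrt{n}$ --- the same order as the mean itself. The bound then evaluates to the constant $2\e^{-\epsilon^2/(2c^2)}$, which is not summable over $n$ (nor over any subsequence), so Borel--Cantelli gives nothing and you cannot conclude $\etsp(\QQ_n)/\sqrt{n}\to\btsp$ almost surely; you only get that the normalized length stays within a constant window of its mean. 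Closing this requires a sharper device: either the Rhee--Talagrand martingale bound, in which the increment upon revealing the $i$th point is controlled by the distance from that point to the nearest of the points not yet revealed, so that the sum of squared conditional increments is $O(\log n)$ rather than $O(n)$ and the resulting tail bound is summable at scale $\epsilon\sqrt{n}$; or Steele's original route via subadditive Euclidean functionals and a subadditive-ergodic-type theorem, which bypasses concentration altogether. As written, the last third of your proposal would fail.
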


The constant $\btsp$ has been estimated numerically as $\btsp \approx
0.7120\pm 0.0002$,~\cite{AGP-OCM:96}.  

The Euclidean Minimum Hamiltonian Path (EMHP) problem is to compute
the shortest path through a set of points. In this paper we consider a
constrained EMHP problem:
  Given a start point $\s$, a set of $n$ points $\QQ$, and a finish
  point $\f$, all in $\real^2$, determine the shortest path which
  starts at $\s$, visits each point in $\QQ$ exactly once, and
  terminates at $\f$.
  We let $\emhp(\s,\QQ,\f)$ denote the length of the shortest path.
  \begin{corollary}[Length of EMHP]
    \label{cor:emhp_length}
    Consider a set $\QQ$ of $n$ points independently and uniformly
    distributed in a compact set $\env$ of area $|\env|$, and any two
    points $\s,\f\in \env$.  Then with probability one,
\[
\lim_{n\to +\infty}\frac{\emhp(\s,\QQ,\f)}{\sqrt{n}} = \btsp
\sqrt{|\env|},
\]
where $\btsp$ is defined in Theorem~\ref{thm:etsp_length}.
  \end{corollary}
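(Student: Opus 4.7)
The plan is to sandwich $\emhp(\s,\QQ,\f)$ between two quantities that both converge to $\btsp\sqrt{|\env|}$ after dividing by $\sqrt{n}$, and then apply Theorem~\ref{thm:etsp_length} together with a squeeze argument. Concretely, I would establish the deterministic two-sided bound
\[
\etsp(\QQ) - \diam(\env) \;\leq\; \emhp(\s,\QQ,\f) \;\leq\; \etsp(\QQ) + 2\diam(\env),
\]
which is valid for every realization of $\QQ$ and every pair $\s,\f\in \env$, and then divide through by $\sqrt{n}$.

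For the upper bound, I would start from the optimal ETSP tour of $\QQ$, delete one arbitrary edge to obtain a Hamiltonian path through $\QQ$ with endpoints $q_{i_1}$ and $q_{i_n}$ of length at most $\etsp(\QQ)$, and then prepend the edge $(\s,q_{i_1})$ and append $(q_{i_n},\f)$. The resulting curve is a feasible candidate for the constrained EMHP, so
\[
\emhp(\s,\QQ,\f) \;\leq\; \etsp(\QQ) + \norm{\s-q_{i_1}} + \norm{q_{i_n}-\f} \;\leq\; \etsp(\QQ) + 2\diam(\env),
\]
where I use that $\env$ is compact so $\diam(\env)<\infty$. For the lower bound, I would take the optimal EMHP $\s\to q_{i_1}\to \cdots \to q_{i_n}\to \f$, delete the first and last edges to obtain a Hamiltonian path through $\QQ$, and close it into a tour of $\QQ$ by adding the edge $(q_{i_1},q_{i_n})$. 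The triangle inequality $\norm{q_{i_1}-q_{i_n}} \leq \norm{q_{i_1}-\s} + \norm{\s-\f} + \norm{\f-q_{i_n}}$ lets the two deleted edge lengths cancel and yields $\etsp(\QQ) \leq \emhp(\s,\QQ,\f) + \norm{\s-\f}\leq \emhp(\s,\QQ,\f)+\diam(\env)$.

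Dividing the sandwich by $\sqrt{n}$ and letting $n\to\infty$, the $\diam(\env)/\sqrt{n}$ terms vanish, while Theorem~\ref{thm:etsp_length} gives $\etsp(\QQ)/\sqrt{n}\to\btsp\sqrt{|\env|}$ almost surely. The squeeze theorem then delivers $\emhp(\s,\QQ,\f)/\sqrt{n}\to \btsp\sqrt{|\env|}$ with probability one. There isn't really a hard step here: the whole argument is a routine sandwich reduction to the Beardwood--Halton--Hammersley theorem, and the only point worth mentioning explicitly is that compactness of $\env$ makes the ``boundary corrections'' contributed by $\s$ and $\f$ uniformly $O(1)$, hence asymptotically negligible compared to the $\Theta(\sqrt{n})$ growth of $\etsp(\QQ)$.
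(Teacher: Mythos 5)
Your proof is correct and formalizes exactly the argument the paper gives: the paper simply remarks that the corollary ``follows directly from the fact that as $n\to+\infty$, the diameter of $\env$ is negligible when compared to the length of the tour/path,'' and your deterministic sandwich $\etsp(\QQ)-\diam(\env)\leq \emhp(\s,\QQ,\f)\leq \etsp(\QQ)+2\diam(\env)$ followed by division by $\sqrt{n}$ is the standard way to make that remark rigorous. No gaps; this is essentially the same approach with the details written out.
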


  The above corollary states that the length of the EMHP and the ETSP
  tour are asymptotically equal, and it follows directly from the fact
  that as $n\to+\infty$, the diameter of $\env$ is negligible when
  compared to the length of the tour/path.

\subsection{Translational Minimum Hamiltonian Path (TMHP)}
\label{sec:TMHP_rev}

The TMHP problem is posed as follows.
  Given initial coordinates; $\s$ of a start point,
  $\QQ:=\{\q_1,\dots,\q_n\}$ of a set of points, and $\f$ of a finish
  point, all moving with speed $v\in]0,1[$ in the positive
  $y$-direction, determine a minimum length path that starts at time
  zero from point $\s$, visits all points in the set $\QQ$ and ends at
  the finish point. The following gives a solution~\cite{MH-BJN:02} for the TMHP problem. \\%
(i) Define the map $\map{\convertmap}{\real^2}{\real^2}$ by
\begin{equation*}
\convertmap(x,y) = \Big(\frac{x}{\sqrt{1-v^2}},\frac{y}{1-v^2}\Big).
\end{equation*}
(ii) Compute the EMHP that starts at $\convertmap_v(\s)$, passes
through
$\{\convertmap(\q_1),\dots,\convertmap(\q_n)\}=:\convertmap(\QQ)$ and
ends at $\convertmap(\f)$. \\*[0.1cm]%
(iii) To reach a translating point with initial position $(x,y)$ from
the initial position $(X,Y)$, move towards the point $(x,y+vT)$, where
  \begin{equation*}
    \label{eq:travel_time}
    T = \frac{\sqrt{(1-v^2)(X-x)^2 + (Y-y)^2}}{1-v^2} - 
    \frac{v(Y-y)}{1-v^2}.
   \end{equation*}
The length $\tmhp_v(\s,\QQ,\f)$ of the path is as follows.
\begin{lemma}[TMHP length, \cite{MH-BJN:02}]
  \label{lem:ttsp}
  Let the initial coordinates $\s=(x_{\s},y_{\s})$ and
  $\f=(x_{\f},y_{\f})$, and the speed of the points $v\in{]0,1[}$. Then,
  \[
  \tmhp_v(\s,\QQ,\f) =
  \emhp(\convertmap(\s),\convertmap(\QQ),\convertmap(\f)) +
  \frac{v(y_{\f}-y_{\s})}{1-v^2}.
  \]
\end{lemma}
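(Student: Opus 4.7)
The plan is to derive a closed-form for the time the vehicle spends traveling between two consecutive translating points, show that after the coordinate change $\convertmap$ this time reduces to a Euclidean distance plus a purely telescoping term, and then sum along any ordering of $\QQ$ to obtain the stated identity.

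First, I would fix an ordering $\s = \q_0, \q_{\pi(1)},\dots,\q_{\pi(n)}, \q_{n+1} = \f$, write $\q_{\pi(i)} = (x_i, y_i)$, and let $T_i$ denote the time at which the $i$th point in the chosen order is intercepted (with $T_0 = 0$). Because the vehicle moves at unit speed, the path length equals $T_{n+1}$. Applying the single-target interception formula from step~(iii) of the TMHP algorithm, with vehicle position $(x_{i-1}, y_{i-1} + vT_{i-1})$ at time $T_{i-1}$ and next target initial position $(x_i, y_i)$ (so that the target is at $(x_i, y_i + vT_{i-1})$ at time $T_{i-1}$), the incremental travel time simplifies, since the shift $vT_{i-1}$ cancels from the $y$-difference, to
\begin{equation*}
T_i - T_{i-1} \;=\; \frac{\sqrt{(1-v^2)(x_{i-1}-x_i)^2 + (y_{i-1}-y_i)^2}}{1-v^2} \;-\; \frac{v(y_{i-1}-y_i)}{1-v^2}.
\end{equation*}

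Next, I would rewrite this in the transformed coordinates. A direct substitution using $\convertmap(x,y) = (x/\sqrt{1-v^2},\, y/(1-v^2))$ shows that the square-root term is precisely the Euclidean distance $\|\convertmap(\q_{i-1})-\convertmap(\q_i)\|$, while the second term equals $v(y'_{i-1} - y'_i)$ with $y'_i$ denoting the $y$-coordinate of $\convertmap(\q_i)$. Summing from $i=1$ to $i=n+1$ and telescoping the $y'$ terms, the total path length becomes
\begin{equation*}
T_{n+1} \;=\; \sum_{i=1}^{n+1}\|\convertmap(\q_{i-1})-\convertmap(\q_i)\| \;+\; v\bigl(y'_{\f} - y'_{\s}\bigr),
\end{equation*}
and substituting $y'_{\s} = y_{\s}/(1-v^2)$ and $y'_{\f} = y_{\f}/(1-v^2)$ yields exactly the desired additive constant.

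Finally, I would observe that the additive term $v(y_\f - y_\s)/(1-v^2)$ is independent of the ordering of $\QQ$. Hence the ordering minimizing $T_{n+1}$ coincides with the ordering minimizing the Euclidean path length through $\convertmap(\QQ)$ from $\convertmap(\s)$ to $\convertmap(\f)$, and taking the minimum on both sides gives the claimed identity for $\tmhp_v(\s,\QQ,\f)$.

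The only mildly delicate step is the algebraic reduction in the first displayed equation: one must carefully verify that the vehicle's accrued $y$-translation $vT_{i-1}$ cancels exactly against the target's accrued translation, so that the inter-capture time depends only on initial positions. Everything else is mechanical substitution and telescoping.
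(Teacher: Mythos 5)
Your proof is correct. The paper does not actually prove this lemma---it is stated as a quoted result from~\cite{MH-BJN:02}---and your derivation (verify the single-interception time formula, note that the accrued translation $vT_{i-1}$ cancels between consecutive captures so the increment depends only on initial relative positions, recognize that increment as the Euclidean distance under $\convertmap$ plus a telescoping term, and minimize over orderings since the telescoped term is order-independent) is precisely the standard argument behind the cited result; I checked the algebra and it goes through. The one step worth making explicit is why, for a fixed visiting order, successive straight-line interception realizes the minimum path length: delaying the capture of point $i-1$ by $\delta$ shifts both the capture position and the next target upward by $v\delta$, leaving the relative displacement---and hence the next interception time---unchanged, so the greedy straight-line schedule achieves the earliest possible capture times and the per-leg lower bound is tight.
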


\section{Demand Speed Greater Than Vehicle Speed}
\label{sec:fast_demand}

Here we develop a policy for the case when the demand speed $v \geq
1$.  In this policy, the service vehicle remains on the deadline and
services demands as per the longest path in a directed acyclic
reachability graph.  In this section we begin by introducing the
reachability graph, and then proceed to state and analyze the Longest
Path policy.

\subsection{Reachable Demands}

Consider a demand generated at time $t_1 \geq 0$ at position $(x,0)$.
The demand moves in the positive $y$-direction at speed $v \geq 1$,
and thus $(x(t),y(t)) = (x,v(t-t_1))$ for each $t\in[t_1,T]$, where
$T$ is either the time of escape (i.e., $T = L/v +t_1$), or it is the
time of capture.  Now, given the service vehicle location
$(X(t),Y(t))$, a demand with position $(x,y(t))$ is reachable if and
only if
\begin{equation}
\label{eq:capture_cond}
v|X(t)-x| \leq Y(t)- y(t).
\end{equation}
That is, the service vehicle must be at a height of at least
$v|X(t)-x|$ above the demand in order to capture it.
\begin{definition}[Reachable set]
  The reachable set from a position $(X,Y)\in \env$ is
  \[
  R(X,Y) := \{(x,y)\in \env : v|X-x| \leq |Y - y|\}.
  \]
  If the service vehicle is located at $(X,Y)$, then a demand can be
  captured if and only if it lies in the set $R(X,Y)$.
\end{definition}
An example of the reachable set is shown in
Figure~\ref{fig:reachability}.
\begin{figure}
\centering 
\includegraphics[width=.47\linewidth]{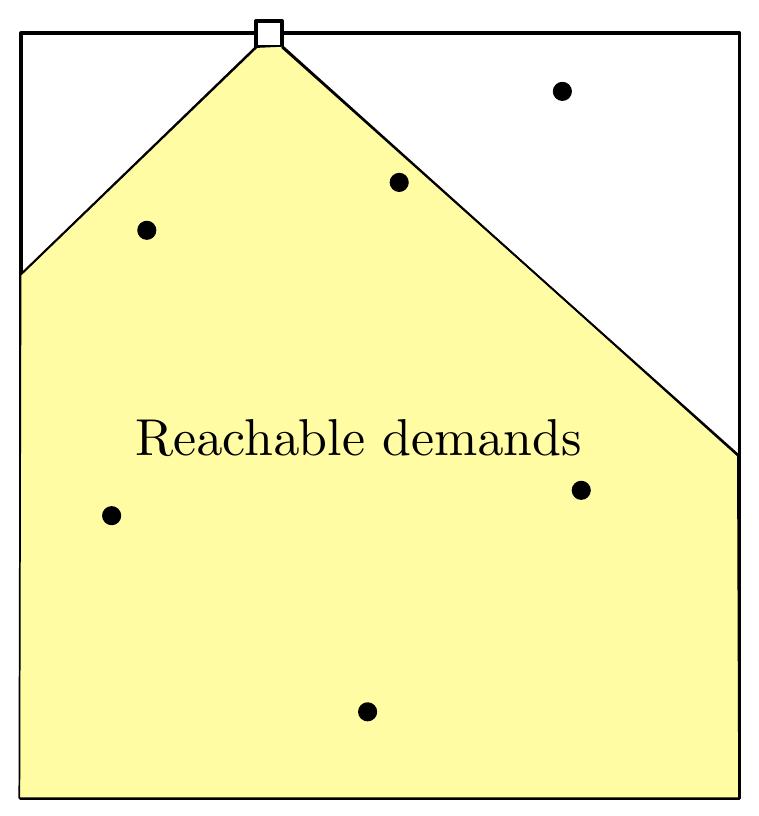}
\hfill
\includegraphics[width=.47\linewidth]{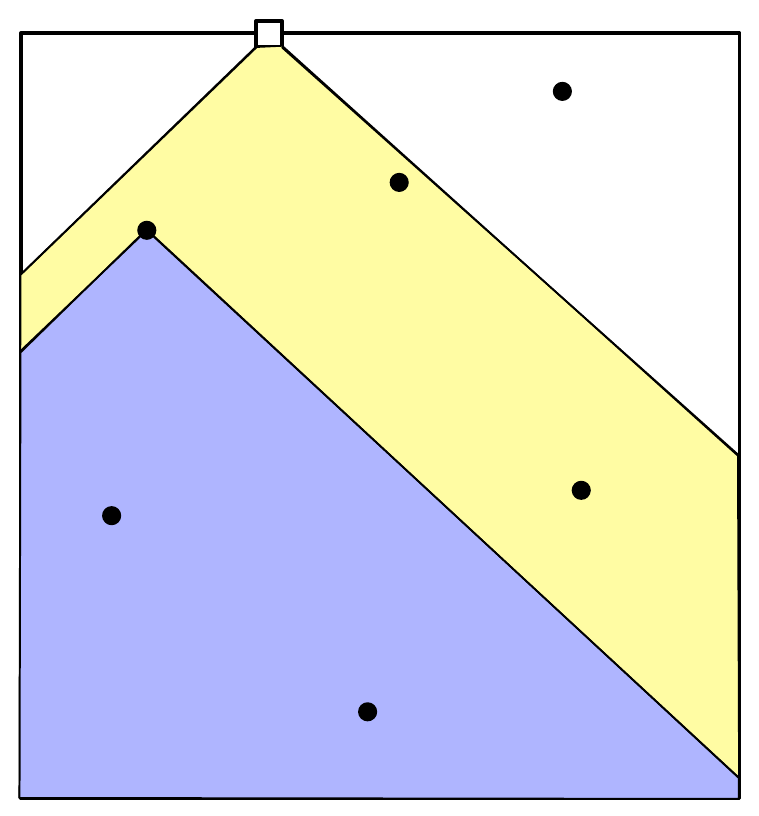} \\
\includegraphics[width=.47\linewidth]{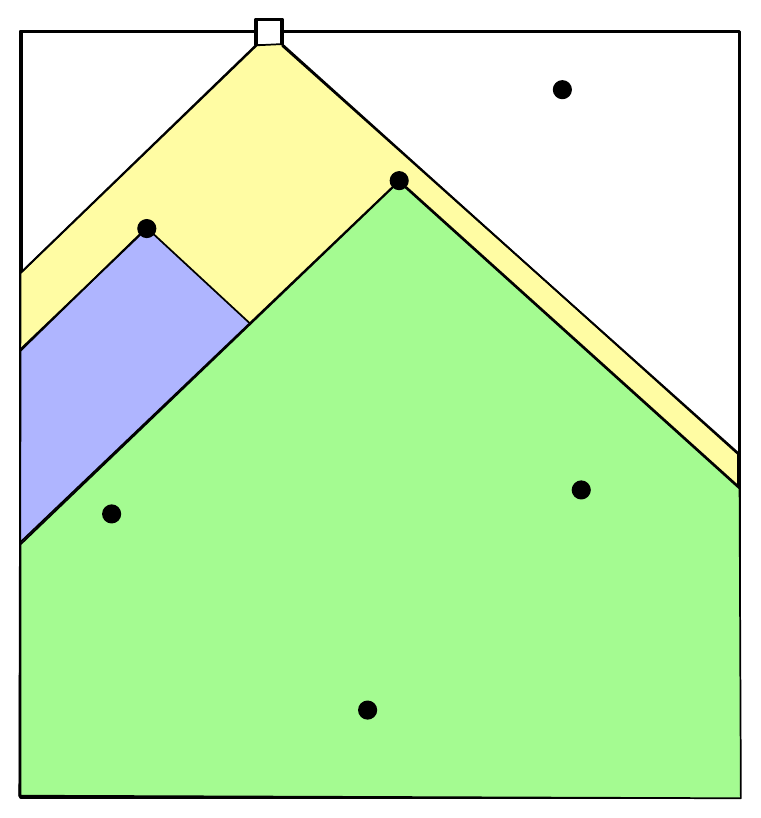}
\hfill
\includegraphics[width=.47\linewidth]{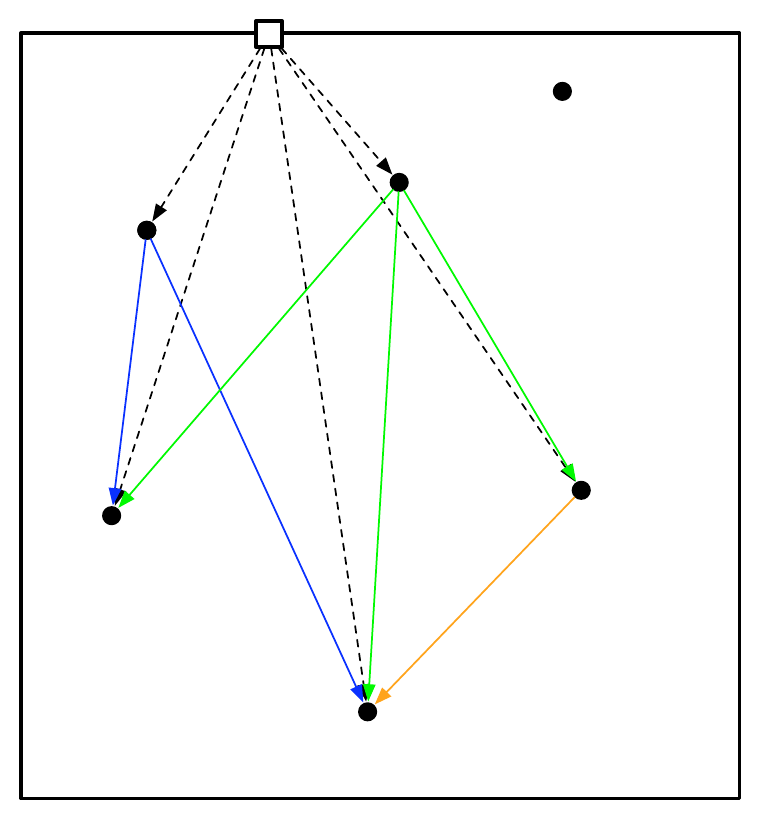}
\caption{The construction of the reachability graph.  The top-left
  figure shows the set of reachable points from a vehicle positioned
  on the deadline.  The top-right and bottom-left figures show the
  reachable set from demand locations.  The bottom-right figure shows
  the reachability graph.} %
\label{fig:reachability} %
\end{figure}%
Next, given a demand in the reachable set, the following motion gives
a method of capture.
\begin{definition}[Intercept motion]
  Consider a vehicle position $((X(\bar t),Y(\bar t))$ and a demand
  position $(x,y(\bar t))\in R(X(\bar t),Y(\bar t))$ at time $\bar t
  \geq 0$.  In intercept motion, the service vehicle captures the
  demand by first moving horizontally at unit speed to the position
  $(x_i,Y(\bar t))$, and then waiting at the location for the demands
  arrival.
\end{definition}

\begin{lemma}[Optimality of intercept motion]
  \label{lem:opt:intercept}
  Consider $v\geq 1$, and let the service vehicle be initially
  positioned on the deadline.  Then, there is an optimal policy in
  which the service vehicle uses only intercept motion.
\end{lemma}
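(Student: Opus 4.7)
\emph{Proof plan.} I would argue that any causal policy starting on the deadline can be replaced, without loss of capture fraction, by one using only intercept motion. The argument has two main steps.

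First, I would note that intercept motion consists of horizontal movement followed by waiting at a fixed $y$-coordinate; in particular it preserves the vehicle's $y$-coordinate. Consequently a vehicle starting at $Y(0)=L$ and using only intercept motion satisfies $Y(t)=L$ for all $t$, so candidate optimal policies of this form are automatically ``deadline policies''. This reduces the problem to showing that deadline policies are (weakly) optimal and that, among them, intercept motion suffices.

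Second, I would argue that restricting to the deadline is weakly optimal when $v\geq 1$. The reachable set is monotone in $Y$: for $Y<L$, $R(X,Y)\subseteq R(X,L)$, so leaving the deadline strictly shrinks the set of currently capturable demands. Moreover, because $v\geq 1$, any temporary advantage gained by pursuing a demand diagonally (and thus capturing it earlier than intercept motion would) is paid for by a loss of height that, since demands rise at least as fast as the vehicle can move, cannot be recovered without forgoing future captures. I would formalise this via a trajectory-exchange argument: at any instant the vehicle leaves the deadline to pursue a demand $q$, replace the pursuit by an intercept motion from the current on-deadline position, which captures $q$ and leaves the vehicle at $(x_q,L)$.

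The main obstacle is verifying that the exchange does not lose later captures. Because intercept motion completes capture at the demand's escape time $t_i + L/v$ rather than at some earlier instant, the modified schedule must still be able to service the subsequent demands captured by the original policy. This reduces to checking an inequality relating the horizontal displacement $|x_i-x_{i+1}|$ between consecutive captures to the arrival-time gap $t_{i+1}-t_i$, which should follow from the unit-speed feasibility of the original trajectory,
\[
\sqrt{(x_i-x_{i+1})^2+(y_i-y_{i+1})^2}\;\leq\;\tau_{i+1}-\tau_i,
\]
combined with the identity $y_{i+1}-y_i = v\bigl((\tau_{i+1}-\tau_i)-(t_{i+1}-t_i)\bigr)$ and the hypothesis $v\geq 1$. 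I expect deriving this inequality (and handling the edge cases where the exchange appears to delay a capture past its deadline, which may require reordering the intercepted demands) to be the delicate technical step of the proof.
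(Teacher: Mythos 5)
Your overall strategy---reduce to deadline policies via a trajectory-exchange---is essentially the paper's, but the step you defer as ``the delicate technical step'' is precisely where the argument breaks, and it does not follow from unit-speed feasibility and $v\geq 1$ as you assert. Writing $\Delta x=x_{i+1}-x_i$, $\Delta y=y_{i+1}-y_i$, $\Delta\tau=\tau_{i+1}-\tau_i$, your identity gives $t_{i+1}-t_i=\Delta\tau-\Delta y/v$, so the inequality you need is $|\Delta x|+\Delta y/v\leq\Delta\tau$. When $\Delta y>0$ (the original trajectory climbs between captures) this is an $\ell^1$-type bound, strictly stronger than the $\ell^2$ bound $\sqrt{\Delta x^2+\Delta y^2}\leq\Delta\tau$ supplied by the speed constraint; squaring shows it forces $|\Delta x|\leq\tfrac{v^2-1}{2v}\,\Delta y$, which is vacuous at $v=1$. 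Concretely, take $v=1$, the vehicle at $(0,L)$, demand $A$ at $(0,L-4)$ and demand $B$ at $(1,L-4.25)$: diving to meet $A$ at $(0,L-2)$ at time $2$ and then steering straight to $(1,L-0.125)$ (a leg of length $2.125$) captures both before they reach the deadline, whereas after intercepting $A$ on the deadline at time $4$ the demand $B$ sits at $(1,L-0.25)$ and cannot be reached before it escapes at time $4.25$, and intercepting $B$ first loses $A$. Here $|\Delta x|=1$ while $t_{i+1}-t_i=0.25$, so your inequality fails, the exchange genuinely loses a capture, and reordering does not rescue it. The inequality is simply false for trajectories that spend the vertical runway below the deadline to chase a demand upward.

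The paper's proof takes a different route that avoids any leg-by-leg inequality: since all demands translate together, capturing a given demand early at $(x,Y)$ or late at $(x,L)$ leaves every other outstanding demand in the same position \emph{relative} to the vehicle at the respective capture instants, and the paper then invokes the inclusion $R(x,Y)\subseteq R(x,L)$ to conclude that descending confers no advantage. Be aware, though, that this argument leans on the characterization ``capturable if and only if in $R(X,Y)$,'' which is exact only on the deadline; from $Y<L$ the vehicle can also capture demands outside $R(X,Y)$ by climbing, which is exactly the maneuver in the configuration above. So under either decomposition, the real content of the lemma is showing that such climbing maneuvers yield no long-run advantage in capture fraction, and neither your deferred inequality nor a bare appeal to monotonicity of $R$ disposes of them; any complete write-up must confront this case explicitly.
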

\begin{proof}
  Let the service vehicle be positioned at $(X,L)$, and consider a
  demand at $(x,y)\in R(X,L)$.  From equation~(\ref{eq:capture_cond}),
  we have $v|X-x| \leq L - y$.  If $v|X-x| = L - y$, then deadline
  motion is the only way in which the demand can be captured. Thus,
  assume that $v|X-x| < L - y$, and consider two cases; Case~1 in
  which intercept motion is used, and Case~2 in which the demand is
  captured at a location $(x,Y)$, where $Y < L$.

  Notice that the position of each outstanding demand relative to the
  service vehicle position at capture is the same in Case~1 as in
  Case~2. Thus, the reachable set in Case~2 is a strict subset of
  reachable set in Case~1 and the vehicle gains no advantage by moving
  off of the deadline.
\end{proof}

Next, consider the set of demands in $R(X(\bar t),Y(\bar t))$, and
suppose the vehicle chooses to capture demand $i$, with position
$\q_i(\bar t) = (x_i,y_i(\bar t))\in R(X(\bar t),Y(\bar t))$.  Upon
capture at time $T$, the service vehicle can recompute the reachable
set, and select a demand that lies within.  Since all demands
translate together, every demand that was reachable from $\q_i(\bar
t)$, is reachable from $\q_i(T)$.  Thus, the service vehicle can
``look ahead'' and compute the demands that will be reachable from
each captured demand position.  This idea motivates the concept of a
reachability graph.

\begin{definition}[Reachability graph]
  For $v\geq 1$, the reachability graph of a set of points
  $\{\q_1,\ldots,\q_n\}\in\env$, is a directed acyclic graph with
  vertex set $V :=\{1,\ldots,n\}$, and edge set $E$, where for $i,j\in
  V$, the edge $(i,j)$ is in $E$ if and only if $\q_j\in R(\q_i)$ and
  $j \neq i$.
\end{definition}

Given a set $\QQ$ of $n$ outstanding demands, and a vehicle position
$(X,Y)$, we can compute the corresponding reachability graph (see
Fig.~\ref{fig:reachability}) in $O(n^2)$ computation time.  In
addition, by Section~\ref{sec:DAG_longest} we can compute the longest
path in a reachability graph in $O(n^2)$ computation time.

\subsection{A Non-causal Policy and Upper Bound}

To derive an upper bound for $v \geq 1$, we begin by considering a
non-causal policy.  In the online algorithms literature, such a policy
would be referred to as an \emph{offline algorithm}~\cite{AB-REY:98}.
Figure~\ref{fig:noncausal_path} shows an example of a path generated
by the Non-causal Longest Path policy.  Note that the service vehicle
will intercept each demand on the deadline, and thus the path depicts
which demands will be captured, and in what order.

\begin{algorithm}[H] 
  \dontprintsemicolon %
  \nocaptionofalgo %
  \KwAssumes{Vehicle is located on deadline and $v \geq 1$.} %
  Compute the reachability graph of the vehicle position and all
  demands in $\QQ(0)\cup\QQ_{\mathrm{unarrived}}(0)$. \; %
  Compute a longest path in this graph, starting at the service
  vehicle location. \; %
  Capture demands in the order they appear on the path, intercepting
  each demand on the deadline.  \; %
\caption{\bf Non-causal Longest Path (NCLP) policy}
\end{algorithm}

\begin{figure}
\includegraphics[width=.42\linewidth]{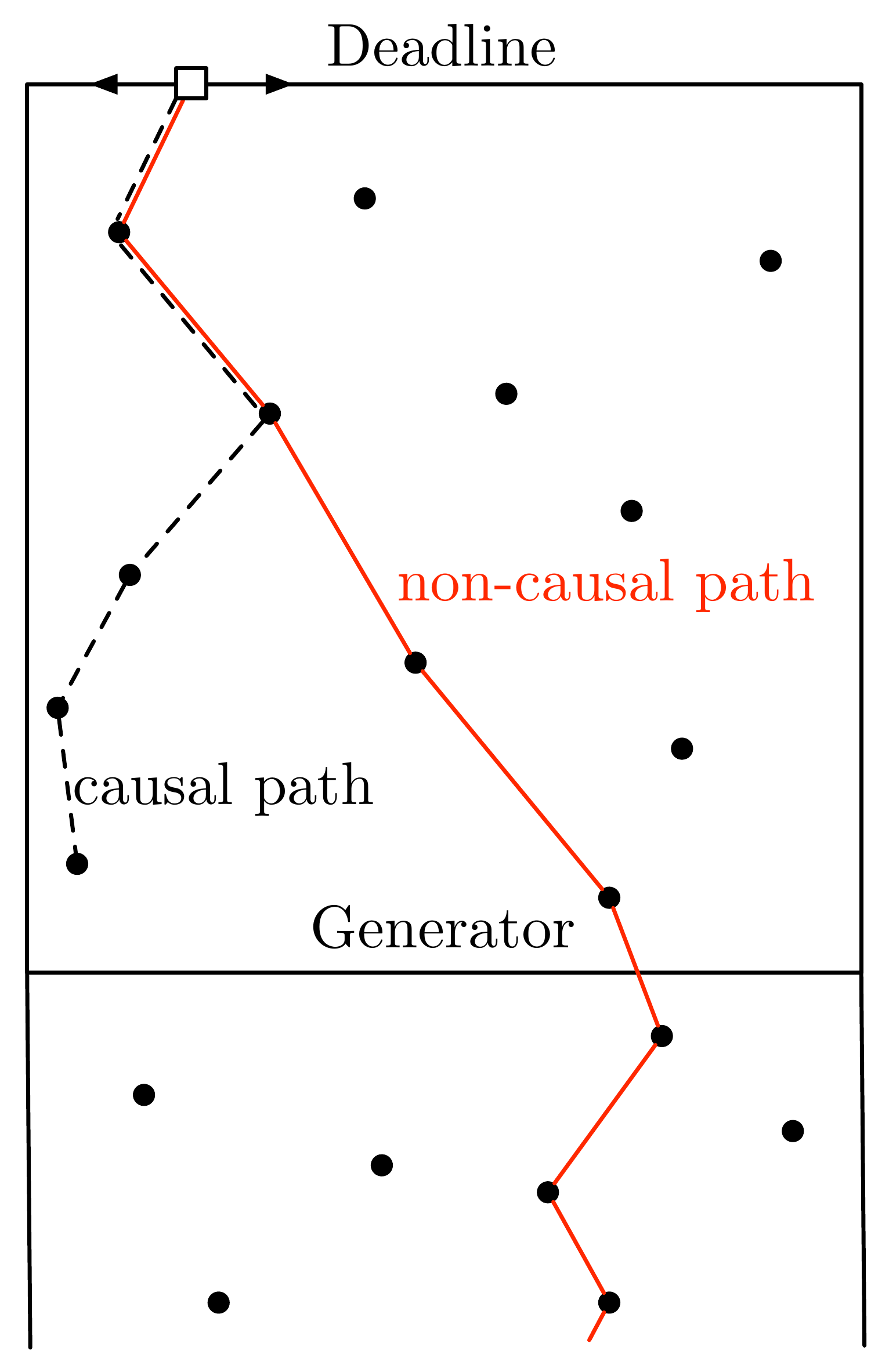}
\centering 
\caption{A snapshot in the evolution of the Non-causal Longest Path
  policy.  The vehicle has planned the solid red path through all
  demands, including those that have not yet arrived.  In comparison,
  a dashed causal longest path is shown, which only considers demands
  that have arrived.}
\label{fig:noncausal_path}
\end{figure}

\begin{lemma}[Optimal non-causal policy]
  If $v\geq 1$, then the Non-causal Longest Path policy is an optimal
  non-causal policy.  Moreover, if $v\geq 1$, then for every causal
  policy $P$,
  \[
  \capfrac(P) \leq \capfrac(\mathrm{NCLP}).
  \]
\end{lemma}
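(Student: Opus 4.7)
My approach is a sample-path analysis of the reachability graph. The key claim is that, for any admissible policy (causal or non-causal) and any realization of the arrival process, the set of demands captured forms a simple directed path in the reachability graph, and NCLP realizes a longest such path.

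To establish this, I would first invoke Lemma~\ref{lem:opt:intercept} to reduce to policies whose vehicle lies on the deadline and uses only intercept motion; the proof of that lemma is pointwise in time and does not appeal to future demand information, so the same reduction applies to causal policies: any causal $P$ can be replaced by a causal intercept-motion policy that captures at least as many demands on every realization. Under this restriction, if demand $i$ is captured on the deadline at time $t_i + L/v$ and demand $j$ is the next capture at time $t_j + L/v$, the vehicle must cover horizontal distance $|x_i - x_j|$ in time $|t_j - t_i|$; with the non-causal initial positions $\q_k = (x_k, -v t_k)$, this is exactly the edge condition $\q_j \in R(\q_i)$ in the reachability graph. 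Hence any admissible capture sequence has cardinality at most the length of a longest path in the graph, and conversely NCLP realizes such a longest path by executing intercept-motion captures in arrival order. This proves the first claim.

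For the second claim, the same argument gives a pathwise bound: on every realization $\omega$, the number of demands captured by any causal $P$ is at most the number captured by NCLP. To translate this into a bound on $\capfrac$, I would use $n_{\capt}(t) + n_{\esc}(t) + n_{\mathrm{out}}(t) = N(t)$ together with the observation that every outstanding demand must have arrived within the last $L/v$ time units, so that $\mathbb{E}[n_{\mathrm{out}}(t)]$ is uniformly bounded in $t$ while $\mathbb{E}[N(t)] = \lambda t$. Hence $\capfrac(P) = \limsup_t \mathbb{E}[n_{\capt,P}(t)/N(t)]$ for every admissible $P$, and the sample-path comparison yields the desired inequality.

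The main obstacle is moving from the sample-path total-capture comparison to a limsup of expectations at fixed $t$: since NCLP exploits future information, it may defer early captures in order to lengthen its overall path, so $n_{\capt,P}(t) \le n_{\capt,\mathrm{NCLP}}(t)$ need not hold pointwise in $t$. The resolution is to perform the comparison in the limit $t \to \infty$, where the outstanding-demand term is negligible and the ratio $n_{\capt}/N$ coincides asymptotically with the capture fraction. A secondary subtlety is verifying that the intercept-motion reduction of Lemma~\ref{lem:opt:intercept} preserves causality, which follows from the strictly local-in-time structure of its swap argument.
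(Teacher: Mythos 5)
Your proof takes essentially the same route as the paper's: the reachability graph on $\QQ(0)\cup\QQ_{\mathrm{unarrived}}(0)$ encodes every admissible capture sequence, intercept motion is without loss of optimality by Lemma~\ref{lem:opt:intercept}, the longest path is therefore optimal among non-causal policies, and causal policies are a subset of non-causal ones. The paper's own proof is only a three-sentence sketch, so the care you take with the edge-condition verification and the passage from a sample-path count comparison to the $\limsup$ of expectations (via the uniformly bounded outstanding-demand term) is elaboration of the same argument rather than a different approach.
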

\begin{proof}
  The reachability graph $\QQ(0)\cup\QQ_{\mathrm{unarrived}}(0)$
  contains every possible path that the service vehicle can follow.
  When $v\geq 1$ the graph is a directed acyclic graph and thus the
  longest path (i.e., the path which visits the most vertices in the
  graph) is well defined.  The vehicle uses intercept motion, and thus
  by Lemma~\ref{lem:opt:intercept} the NCLP policy is an optimal
  non-causal policy, and its capture fraction upper bounds every
  causal policy.
\end{proof}

\subsection{The Longest Path Policy}

We now introduce the Longest Path policy.  In the LP policy, the
fraction $\eta$ is a design parameter. The lower $\eta$ is chosen, the
better the performance of the policy, but this comes at the expense of
increased computation.

\begin{algorithm}[H] 
  \dontprintsemicolon %
  \nocaptionofalgo %
  \KwAssumes{Vehicle is located on deadline and $v \geq 1$} %
  Compute the reachability graph of the vehicle position and all
  demands in $\QQ(0)$. \; %
  Compute a longest path in this graph, starting at the service
  vehicle location. \; %
  Capture demands in the order they appear on the path, intercepting
  each demand on the deadline.  \; %
  Once a fraction $\eta\in{]0,1]}$ of the demands on the path have
  been serviced, recompute the reachability graph of all outstanding
  demands and return to step 2. \; %
\caption{\bf The Longest Path (LP) policy}
\end{algorithm}

In the following theorem, we relate the Longest Path policy to its
non-causal relative.  Such a bound is referred to as a
\emph{competitive ratio} in the online algorithms
literature~\cite{AB-REY:98}.

\begin{theorem}[Optimality of Longest Path policy]
\label{thm:LP_to_NCLP}
If $v\geq 1$, then
\[
\capfrac(\mathrm{LP}) \geq \left(1 -
  \frac{vW}{L}\right)\capfrac(\mathrm{NCLP}),
\]
and thus the LP policy is optimal as $vW/L \to +\infty$.
\end{theorem}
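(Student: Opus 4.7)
\textbf{The plan} is to compare LP with NCLP in an amortized, epoch-by-epoch fashion, exploiting the reduction to sequences of captures on the deadline.

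\textbf{First}, by Lemma~\ref{lem:opt:intercept} I may assume both policies keep the vehicle on the deadline and capture each demand via intercept motion. Under this reduction, each policy is specified by an ordered chain of demands in the reachability graph, and each captured demand is intercepted on the deadline at time equal to its arrival time plus $L/v$. Consecutive captures $(x_i,a_i),(x_j,a_j)$ with $a_i<a_j$ must satisfy the vehicle speed constraint $|x_i-x_j|\leq a_j-a_i$, which is equivalent to the reachability condition.

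\textbf{Second}, I would partition time into LP epochs $[t_k,t_{k+1}]$ bounded by consecutive recomputes. In epoch $k$, LP computes the longest path $P_k$ of length $m_k$ in the reachability graph of $\QQ(t_k)$ and captures $\lceil\eta m_k\rceil$ demands from it; the epoch duration satisfies $\Delta t_k\leq L/v$ because every demand in $\QQ(t_k)$ must be captured within its remaining lifetime, which is at most $L/v$.

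\textbf{Third}, for each epoch I would compare LP's captures to NCLP's captures over the aligned window $[t_k,t_k+L/v]$. Since any capture by NCLP at a time $\tau$ has arrival time $\tau-L/v$, the demands NCLP captures in such a window of length $L/v$ were either in $\QQ(t_k)$ or arrived during the window. The key structural claim I would establish is that NCLP's additional captures arising from its look-ahead are bounded by a $vW/L$ fraction of the total: the vehicle's unit horizontal speed combined with the environment width $W$ limits useful look-ahead to roughly $W$ time units, and this duration is the fraction $W/(L/v)=vW/L$ of the demand lifetime. Summing the per-epoch bounds and passing to the limit in the definition of $\capfrac(\cdot)$ then yields $\capfrac(\mathrm{LP})\geq(1-vW/L)\capfrac(\mathrm{NCLP})$.

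\textbf{Main obstacle.} The delicate step is the structural claim in Step Three: formally showing that any chain in $\QQ(t_k)\cup\QQ_{\mathrm{unarrived}}(t_k)$ admits a sub-chain contained entirely in $\QQ(t_k)$ whose length is at least a $(1-vW/L)$-fraction of the original. I would attempt this via a pruning argument that removes look-ahead demands beyond a horizon of $W$ time units, using the geometry of the reachability cone from $(X,L)$ to control which future demands can possibly contribute. A secondary difficulty is that the outstanding-demand sets of LP and NCLP may diverge across epochs, so an averaging/amortized argument over the long run (rather than a strict per-epoch inequality) is likely what is needed to conclude cleanly; the independence of the bound from $\eta$ suggests the argument is essentially geometric rather than relying on small recompute windows.
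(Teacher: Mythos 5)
There is a genuine gap, and it sits exactly where you flag your ``main obstacle.'' The structural claim you propose in Step Three --- that any chain in $\QQ(t_k)\cup\QQ_{\mathrm{unarrived}}(t_k)$ admits a sub-chain inside $\QQ(t_k)$ whose length is a $(1-vW/L)$-fraction of the original --- is aimed at the wrong source of loss and cannot deliver the stated factor. First, note that with intercept motion every capture occurs on the deadline exactly $L/v$ time units after arrival, so every demand NCLP captures during your window $[t_k,t_k+L/v]$ arrived during $[t_k-L/v,t_k]$ and therefore already lies in $\QQ(t_k)$; conversely, a demand arriving inside the window is captured only after the window ends. Hence there are no ``additional captures arising from look-ahead'' to prune in your comparison, and the pruning lemma, even if established, would give a factor of $1$ and say nothing about why LP loses to NCLP. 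Second, the factor $1-vW/L$ is not a deterministic combinatorial fact about chains: in the paper it is a statement in expectation that relies on the uniform spatial distribution of outstanding demands (Lemma~\ref{lem:poisson}), which your argument never invokes.

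The missing idea is the one you set aside as a ``secondary difficulty'': the divergence of the two vehicles' positions. At a recompute time $t_1$, write NCLP's future captures restricted to $\QQ(t_1)$ as an ordered chain $(\q_1,\dots,\q_m)$ (still a chain after deleting unarrived demands, by transitivity of reachability). LP can follow the suffix $(\q_{n+1},\dots,\q_m)$, where $\q_{n+1}$ is the first demand of the chain reachable from $\p_a(t_1)$, so $\LL_a\geq m-n$. The key geometric fact is that the reachability cone from \emph{any} point of the deadline covers the entire width $W$ at depth $vW$; consequently each unreachable $\q_1,\dots,\q_n$ must have $y$-coordinate in ${]L-vW,L]}$, and by Lemma~\ref{lem:poisson} that strip carries an expected fraction $vW/L$ of the outstanding demands, giving $\expectation{\LL_a\mid \subscr{N}{tot}}\geq (1-vW/L)\,\subscr{N}{tot}\,\capfrac(\mathrm{NCLP})$, from which the theorem follows by total expectation. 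Your heuristic that the useful horizon is $W$ time units out of a lifetime of $L/v$ is the right intuition for where $vW/L$ comes from, but it has to be implemented as this reachable-strip comparison of positions, not as a pruning of unarrived demands from NCLP's path.
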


\begin{proof}
  Suppose that the generation of demands begins at $t=0$ and let us
  consider two scenarios; (a) the vehicle uses the Longest Path
  policy, and (b) the vehicle uses the Non-causal Longest Path policy.
  Then, at any instant in time $t_1 >0$ we can compare the number of
  demands captured in scenario (a) to the number captured in scenario
  (b) (refer to Fig.~\ref{fig:scenario_ab}).
  
  \begin{figure}
    \includegraphics[width=.45\linewidth]{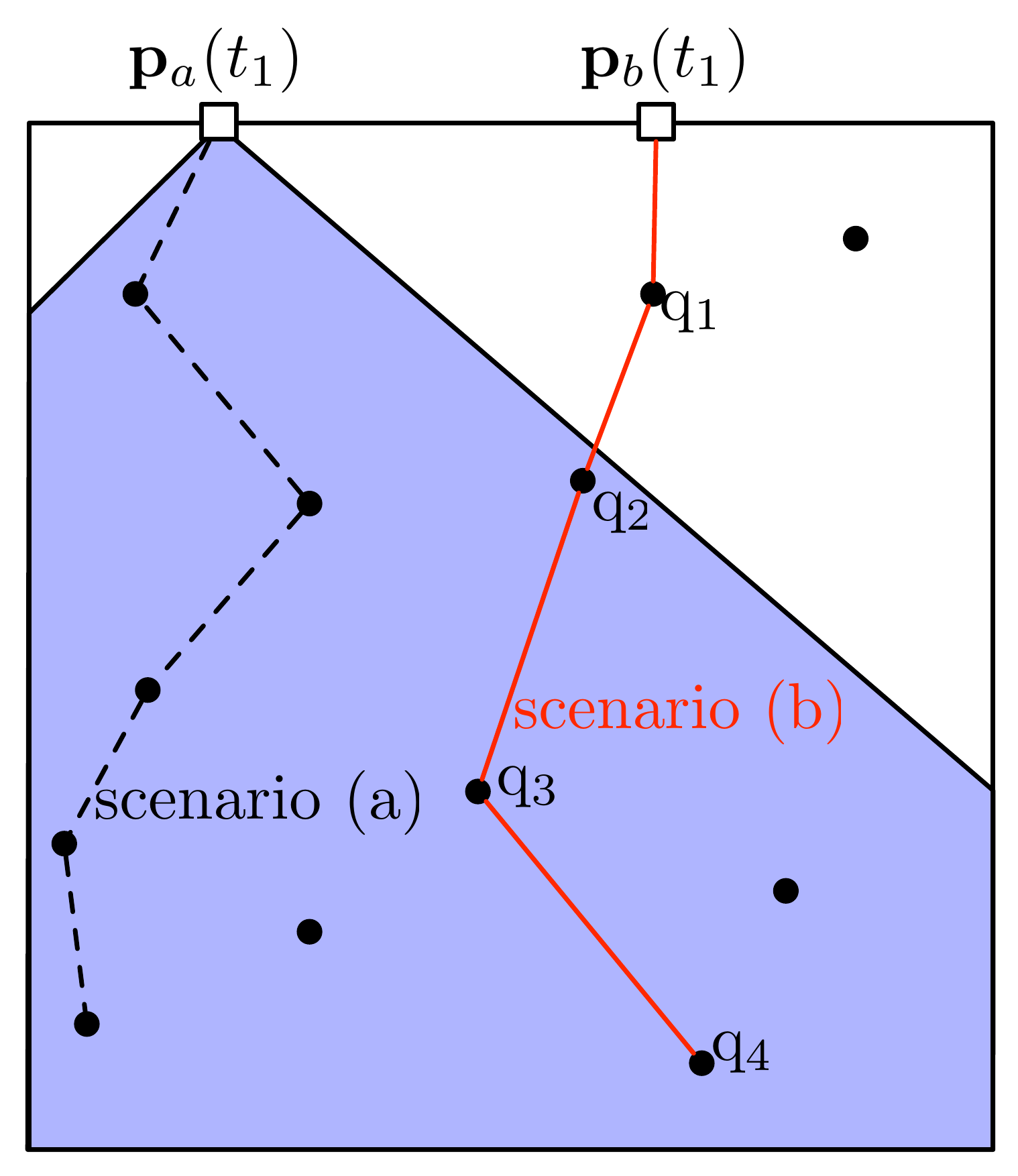}
    \centering
    \caption{Scenario (a) and (b) for the proof of
      Theorem~\ref{thm:LP_to_NCLP}.  Path (a) visits five demands and
      thus $\LL_a=5$.  Path (b) visits four demands, yielding $m=4$.
      The demand $\q_2$ is the highest on path (b) that can be
      captured from $\p_a(t_1)$.  Thus, $n=1$, and $5=\LL_a > m-n =
      3$.}
    \label{fig:scenario_ab}
  \end{figure}

  Let us consider a time instant $t_1$ where in scenario (a), the
  vehicle is recomputing the longest path through all outstanding
  demands $\QQ(t_1)$.  Let us denote by $\p_a(t_1)$ and $\p_b(t_1)$,
  the vehicle position in scenario (a) and scenario (b), respectively,
  at time $t_1$.  In scenario (b), let the path that the vehicle will
  take through $\QQ(t_1)$ be given by $(\q_1,\q_2,\ldots,\q_m)\in
  \QQ(t_1)$.  The demand $\q_1$ is reachable from $\p_b(t_1)$, but it
  may not be reachable from $\p_a(t_1)$.  However, a lower bound on
  the length of the longest path in scenario (a) is:
  $(\q_{n+1},\q_{n+2},\ldots,\q_m)$, where $\q_{n+1}$, $n\in
  \{0,\ldots,m-1\}$, is the highest demand that can be captured from
  $\p_a(t_1)$, see Fig.~\ref{fig:scenario_ab}.  Thus, the length of
  the longest path in scenario (a), $\LL_a$, is at least
  \begin{equation}
    \label{eq:length_sc_a}
  \LL_a \geq m - n,
  \end{equation}
  where $m$ is the length of the path in scenario (b).

  Now, since the deadline has width $W$, the vehicle in scenario (a)
  can capture any demand $(x,y)$ with $y \leq L-vW$. Thus, the demands
  $\q_1,\ldots,\q_n$ must all have $y$-coordinates in ${]L-vW,L]}$.  Let
  the total number of outstanding demands at time $t_1$ be
  $\subscr{N}{tot}$.  Then, conditioned on $\subscr{N}{tot}$, by
  Lemma~\ref{lem:poisson}, the expected number of outstanding demands
  contained in $[0,W]\times {]L-vW,L]}$ is $\subscr{N}{tot} vW/L$.
  Hence,
  \begin{equation}
    \label{eq:n_expect}
  \expectation{n|\subscr{N}{tot}} = \subscr{N}{tot} \frac{vW}{L}
  \capfrac(\mathrm{NCLP}).
  \end{equation}
  Similarly, for the length of the path through $\QQ(t_1)$ in scenario
  (b), we have
  \begin{equation}
    \label{eq:m_expect}
    \expectation{m|\subscr{N}{tot}} = \subscr{N}{tot}
    \capfrac(\mathrm{NCLP}).
  \end{equation}
  Combining equations~(\ref{eq:n_expect}) and~(\ref{eq:m_expect}) with
  equation~(\ref{eq:length_sc_a}) we obtain
  \begin{align*}
    \expectation{\LL_a|\subscr{N}{tot}} &\geq \subscr{N}{tot}
    \left(1-\frac{v W}{L}\right) \capfrac(\mathrm{NCLP}), \\
    \expectation{\frac{\LL_a}{\subscr{N}{tot}}|\subscr{N}{tot}} &\geq
    \left(1-\frac{v W}{L}\right) \capfrac(\mathrm{NCLP}).
  \end{align*}
  But $\LL_a/\subscr{N}{tot}$ is the fraction of outstanding demands
  in $\QQ(t_1)$ that will be captured in scenario (a), and it does not
  depend on the value of $\subscr{N}{tot}$.  By the law of total
  expectation
  \[
  \expectation{\frac{\LL_a}{\subscr{N}{tot}}} =
  \expectation{\expectation{\frac{\LL_a}{\subscr{N}{tot}}|\subscr{N}{tot}}}
  \geq \left(1-\frac{v W}{L}\right)
  \capfrac(\mathrm{NCLP}).
  \]
  Each time the
  longest path is recomputed, the path in scenario (a) will capture at
  least this fraction of demands. Thus, we have
  $\capfrac(\mathrm{LP}) \geq
  \expectation{\LL_a/\subscr{N}{tot}}$ and have proved the result.
\end{proof}
\begin{remark}[Conservativeness of bound]
  The bound in Theorem~\ref{thm:LP_to_NCLP} is conservative. This is
  primarily due to bounding the expected distance between the causal
  and non-causal paths by $W$. The distance between two independently
  and uniformly distributed points in $[0,W]$, is $W/3$. The distance
  is even less if the points are positively correlated (as is likely
  the case for the distance between paths). Thus, it seems that it may
  be possible to increase the bound to
  \[
  \capfrac(\mathrm{LP}) \geq \left(1 -
  \frac{vd}{L}\right)\capfrac(\mathrm{NCLP}),
  \]
  where $d < W/3$. \oprocend
\end{remark}

The previous theorem establishes the performance of the Longest Path
policy relative to a non-causal policy.  However, the LP policy is
difficult analyze directly.  This is due to the fact that the position
of the vehicle at time $t$ depends on the positions of all outstanding
demands in $\QQ(t)$.  Thus, our approach is to lower bound the capture
fraction of the LP policy with a greedy policy:

\begin{algorithm}[H] 
  \dontprintsemicolon %
  \nocaptionofalgo %
  \KwAssumes{Vehicle is located at $(X,L)$} %
  Compute the reachability set $R(X,L)$. \; %
  Capture the demand in $R(X,L)$ with the highest $y$-coordinate using
  intercept motion. \;%
  Repeat. \;
  \caption{\bf The Greedy Path (GP) policy}
\end{algorithm}

Given a set of outstanding demands $\QQ(t)$ at time $t$, the Greedy
Path policy generates a suboptimal longest path through $\QQ(t)$.  In
addition, the vehicle position is independent of all outstanding
demands, except the demand currently being captured.  Thus, the
capture fraction of the Greedy Path policy provides a lower bound
for the capture fraction of the Longest Path policy.  We are now
able to establish the following result.

\begin{theorem}[Lower Bound for Longest Path policy]
\label{thm:LP_lower_bd}
If $L \geq vW$, then for the Longest Path policy
\[
\capfrac(\mathrm{LP}) \geq \capfrac(\mathrm{GP})
\geq \frac{1}{\sqrt{\pi \alpha}\erf(\sqrt{\alpha}) + \e^{-\alpha}},
\]
where $\alpha = \lambda W/2$ and $\erf:\real \to [-1,1]$ is the error
function.
\end{theorem}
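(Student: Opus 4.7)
The plan is to handle the two inequalities separately, devoting most of the effort to a renewal analysis of GP. The first inequality $\capfrac(\mathrm{LP}) \geq \capfrac(\mathrm{GP})$ is essentially the content of the paragraph immediately preceding the theorem: at every replanning instant the path produced by GP is one particular path in the reachability graph while LP picks a longest one, so LP captures at least as many demands per cycle; moreover the post-capture vehicle position in GP depends only on the demand just captured, which makes the GP capture fraction well-defined as a renewal-type limit. Thus the real work is to lower bound $\capfrac(\mathrm{GP})$ by the right-hand side.

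For GP, I would analyze consecutive captures as a renewal process. Let $T_0$ be a capture time and $X_0 \in [0,W]$ the post-capture vehicle position; the next inter-capture time equals $\hat s := (L - y^\star)/v$, where $y^\star$ is the $y$-coordinate of the demand selected next by the greedy rule (with the convention that the rule may have to wait for the next arrival if $R(X_0, L)$ is momentarily empty). It is convenient to reparametrize each demand by its deadline $t_j' := t_j + L/v$ and by $\tau_j := t_j' - T_0$; in those coordinates the reachability condition from $(X_0, L)$ becomes $|X_0 - x_j| \leq \tau_j$, and $\hat s$ coincides with the minimum $\tau$-coordinate of any demand in the wedge $C_{X_0} := \{(x,\tau) : |X_0 - x| \leq \tau,\ x\in[0,W],\ \tau > 0\}$. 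The key observation is that the demands with $\tau_j > 0$ form the restriction of the original Poisson process (Lemma~\ref{lem:poisson}, rewritten in $(x,t')$-space with rate $\lambda/W$) to $\{t' > T_0\}$; every demand that GP has already captured or that has already escaped lies in $\{t' \leq T_0\}$, so no past decision thins this restriction. The hypothesis $L \geq vW$ further ensures that any demand with $|X_0 - x_j| \leq \tau_j$ can actually be reached, so no additional truncation of $C_{X_0}$ is needed.

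Since the wedge $C_{X_0}$ is smallest at a corner, the truncated area $A(s;X_0) := |C_{X_0} \cap \{\tau \leq s\}|$ satisfies $A(s;X_0) \geq A(s;0)$, where $A(s;0) = s^2/2$ for $s \leq W$ and $A(s;0) = Ws - W^2/2$ for $s > W$. Poisson stochastic domination then gives $\expectation{\hat s \mid X_0} \leq \expectation{\hat s \mid X_0 = 0}$ irrespective of the (unknown) stationary distribution of $X_0$. Splitting $\int_0^\infty \e^{-\lambda A(s;0)/W}\,ds$ at $s = W$ and substituting $u = s\sqrt{\lambda/(2W)}$ turns the first piece into $\sqrt{\pi W/(2\lambda)}\,\erf(\sqrt\alpha)$ (recalling $\alpha = \lambda W/2$) and leaves $\e^{-\alpha}/\lambda$ for the second. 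Multiplying by $\lambda$ then yields
\[
\capfrac(\mathrm{GP}) \;=\; \frac{1}{\lambda \expectation{\hat s}} \;\geq\; \frac{1}{\sqrt{\pi\alpha}\,\erf(\sqrt\alpha) + \e^{-\alpha}},
\]
which is the claimed bound.

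The main subtle step is the Poisson claim: one might worry that the greedy history has thinned the distribution of demands remaining at time $T_0$ in a non-trivial way. Doing all the bookkeeping in $(x,t')$-space rather than in the physical $(x,y,t)$-space resolves this cleanly, because every demand GP has already acted on lives in the ``past'' half-plane $\{t' \leq T_0\}$ while the entire next-capture calculation lives in the ``future'' half-plane $\{t' > T_0\}$; independence of increments of the Poisson process then delivers the required distributional identity.
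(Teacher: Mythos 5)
Your proposal is correct and arrives at exactly the paper's bound, but by a genuinely different accounting, so a comparison is worthwhile. The probabilistic core is identical in both arguments: reduce to the Greedy Path policy from the worst-case corner position $(0,L)$, and use the Poisson void probability of the truncated reachable set to get the law of the gap to the highest reachable demand (the paper's $y_d$, your $\hat s = y_d/v$). The paper then counts \emph{escapes per capture}: it computes the expected area of the unreachable strip $\esc_{y_d}$, evaluates $\expectation{n_{\esc,i}} = \sqrt{\pi\alpha}\,\erf(\sqrt{\alpha})+\e^{-\alpha}-1$ via an integration by parts, and converts this to a capture fraction through Jensen's inequality applied to $n_{\capt}/(n_{\capt}+n_{\esc})$. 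You instead count \emph{time per capture}, writing $\capfrac(\mathrm{GP}) = 1/(\lambda\,\expectation{\hat s})$ with $\expectation{\hat s}$ obtained as the tail integral $\int_0^\infty \e^{-\lambda A(s;0)/W}\,\dee s$. The two bookkeepings are reconciled by the exact identity $\expectation{n_{\esc,i}} = \lambda\,\expectation{\hat s} - 1$, which holds because $\lambda |R_{y_d}|/(vW)$ is a unit exponential; so your route is not merely analogous but provably equivalent in outcome. What your version buys: the tail-probability integral avoids the integration by parts, the rate argument avoids Jensen, and the deadline-time change of variables gives a cleaner justification of the Poisson claim than the paper's appeal to Lemma~\ref{lem:poisson} (it makes explicit that every demand already captured or escaped lies in the past half-plane $\{t'\le T_0\}$, so the greedy history cannot thin the future). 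What the paper's version buys is the explicit geometric picture of which demands are lost. Both versions rely, with equal informality, on an unproved renewal/ergodic limit to pass from a per-cycle bound to a statement about $\capfrac$, and both use $L \ge vW$ for the same purpose (in your language, to keep the wedge untruncated and every not-yet-arrived demand reachable), so I see no gap in your argument beyond those the paper itself tolerates.
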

\begin{proof}
  We begin by looking at the expression for the capture fraction.
  Notice that if $n_{\capt}(t) > 0$ for some $t > 0$, then
\begin{equation}
\label{eq:capt_prob_rearrange}
\begin{aligned}
  \limsup_{t\to+\infty}\expectation{\frac{n_{\capt}(t)}{n_{\capt}(t)+n_{\esc}(t)}}
  &= \limsup_{t\to+\infty}
  \expectation{\frac{1}{1+\frac{n_{\esc}(t)}{n_{\capt}(t)}}} \\
  &\geq \left(1+\limsup_{t\to+\infty}
    \expectation{\frac{n_{\esc}(t)}{n_{\capt}(t)}}\right)^{-1},
\end{aligned}
\end{equation}
where the last step comes from an application of Jensen's
inequality~\cite{LB:92}.  Thus, we can determine a lower bound on the
capture fraction by studying the number of demands that escape per
captured demand.

Let us study the time instant $t$ at which the service vehicle
captures its $i$th demand, and determine an upper bound on the number
of demands that escape before the service vehicle captures its
$(i+1)$th demand.  Since we seek a lower bound on the capture fraction
of the LP policy, we may consider the path generated by the Greedy
Path policy.  In addition, we consider the worst-case service vehicle
position; namely, the position $(0,L)$ (or equivalently $(W,L)$).

From the position $(0,L)$, the reachable set is 
\[
R(0,L) = \{(x,y)\in\env : vx \leq L\}.
\]
Let $R_y$ denote the reachable set intersected with
$[0,W]\times[L-y,L]$, where $y\in[0,L]$, and let $|R_y|$ denote its
area.  Then,
\[
|R_y| = \begin{cases}
\frac{y^2}{2v}, & \text{if $y \leq vW$}, \\
yW - \frac{vW^2}{2}, & \text{if $y > vW$}.
\end{cases}
\] 
An illustration of the set $R_y$ is shown in
Figure~\ref{fig:lower_bound_prf}.
\begin{figure}
\includegraphics[width=.55\linewidth]{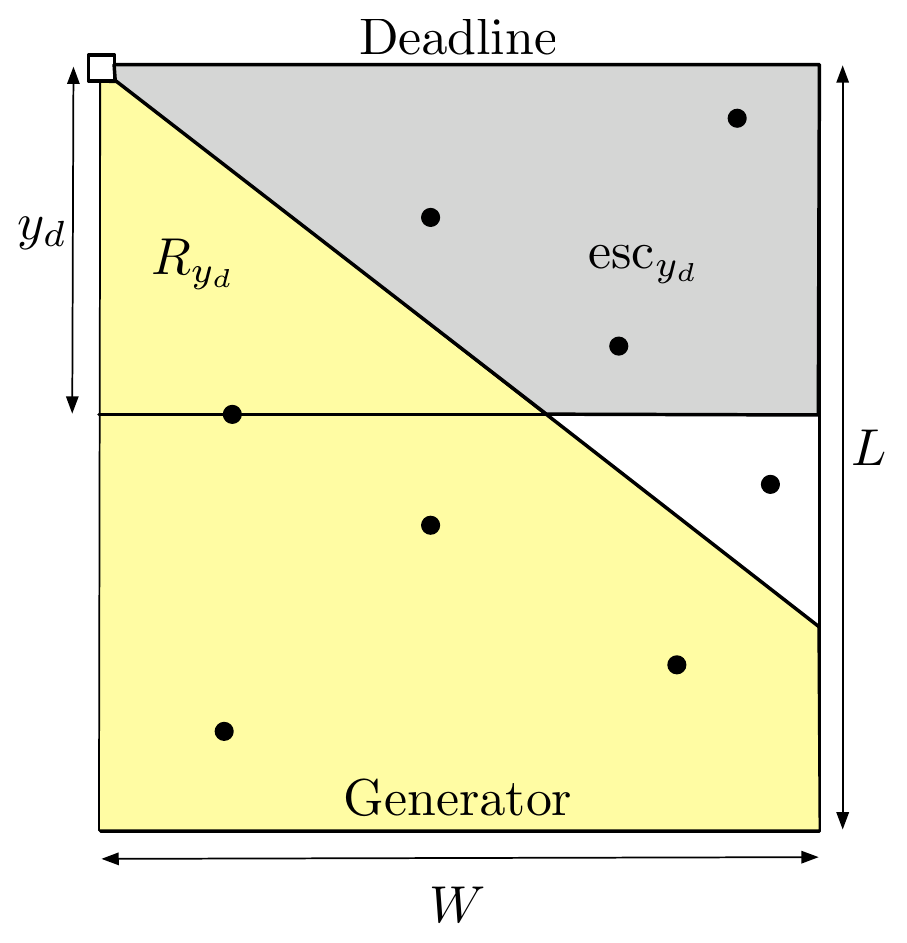}
\centering 
\caption{The setup for the proof of Theorem~\ref{thm:LP_lower_bd}.
  The service vehicle is located at $(0,L)$.  All demands in the
  region $\esc_{y_d}$ escape while capturing the demand with the
  highest $y$-coordinate.}
\label{fig:lower_bound_prf}
\end{figure}
Let $y_d$ be the $y$-distance to the reachable demand with the highest
$y$-coordinate.  That is,
\[
y_d = \min_{(x,y)\in\QQ(t) \cap R(0,L)}\{L - y\},
\]
where $\QQ(t)$ is the set of outstanding demands at time $t$.  By
Lemma~\ref{lem:poisson}, the probability that a subset $\BB \subset
\env$ with area $|\BB|$ contains zero demands is given by
\[
\prob[|\BB \cap \QQ(t)| = 0] = \e^{-\lambda |\BB|/(vW)},
\]
where $|\BB \cap \QQ(t)|$ denotes the cardinality of the finite set
$\BB \cap \QQ(t)$. Thus,
\[
\prob[y_d > y] = \prob[|R_y\cap\QQ(t)| = 0] = \e^{-\lambda |R_y|/(vW)}.
\]
The probability density function of $y_d$ for $y_d \leq vW$ is
\begin{align*}
f(y) = \frac{d}{dy}(1-\prob[y_d > y]) &= \frac{d}{dy}\e^{-\lambda y^2/(2v^2W)} \\
&= \frac{\lambda}{v^2 W}y \e^{-\lambda y^2/(2v^2W)}.
\end{align*}

Now, given $y_d$, all demands residing in the region
$\esc_{y_d}:=([0,W]\times[L-y_d,L]) \setminus R_{y_d}$ will escape (see
Fig.~\ref{fig:lower_bound_prf}).  The area of $\esc_{y_d}$ is
\[
|\esc_{y_d}| = \begin{cases}
y_dW - \frac{y_d^2}{2v}, & \text{if $y_d \leq vW$}, \\
\frac{vW^2}{2}, & \text{if $y_d \geq vW$}.
\end{cases}
\]  

From Section~\ref{sec:poisson_dist}, the expected number of
outstanding demands in an unserviced region of area $A$ is $\lambda
A/(vW)$.  Thus, given that the vehicle is located at $(0,L)$, the
expected number of demands that escape while the service vehicle is
capturing its $(i+1)$th demand is given by
\begin{align*}
&\expectation{n_{\esc,i}} = \frac{\lambda}{vW}\expectation{|\esc_{y_d}|} \\
&= \frac{\lambda}{vW}\left[\int_0^{vW}\left(yW - \frac{y^2}{2v}\right) f(y)
dy + \frac{vW^2}{2}\prob[y_d > vW]\right].
\end{align*}
Applying the probability density function and cumulative distribution
function of $y_d$ we obtain
\begin{multline}
\label{n_int_exp}
\expectation{n_{\esc,i}}=\frac{\lambda^2}{v^3W^2}
\int_0^{vW}\left(yW - \frac{y^2}{2v}\right) y \e^{-\lambda
  y^2/(2v^2W)}dy \\ + \frac{\lambda W}{2}\e^{-\lambda W/2}.
\end{multline}
To evaluate the integral, consider the change of coordinates $z :=
y/vW$, and define $\alpha:=\lambda W/2$.  After simplifying, the
integral becomes
\[
4\alpha^2\int_0^1\left(z^2 - \frac{z^3}{2}\right) \e^{-\alpha z^2}dz.
\]
Integrating by parts we obtain
\begin{equation}
\label{eq:int_sol}
\sqrt{\pi \alpha} \erf(\sqrt{\alpha}) + \alpha \e^{-\alpha} +\e^{-\alpha} -1,
\end{equation}
where $\erf:\real \to [-1,1]$ is the error function:
\[
\erf(x) = \frac{2}{\pi}\int_0^x\e^{-t^2}dt.
\]
Substituting equation~\eqref{eq:int_sol} into
equation~(\ref{n_int_exp}) we obtain
\[
\expectation{n_{\esc,i}} = \sqrt{\pi \alpha} \erf(\sqrt{\alpha})
+\e^{-\alpha} -1.
\]
Since $\expectation{n_{\esc,i}}$ is computed for the worst-case
vehicle position $(0,L)$, and since this expression holds at every
capture, we have that
\[
\limsup_{t\to +\infty}\expectation{\frac{n_{\esc}(t)}{n_{\capt}(t)}} \leq
\sqrt{\pi \alpha} \erf(\sqrt{\alpha}) +\e^{-\alpha} -1,
\]
and thus by equation~\eqref{eq:capt_prob_rearrange} we obtain the
desired result. 
\end{proof}

\section{Demand speed less than vehicle speed}\label{sec:tmhp}
\label{sec:slow_demand}

In this section we study the case when the demand speed $v < 1$.  For
this case, an upper bound on the capture fraction has been derived in
\cite{SDB-SLS-FB:08v}.  We introduce a policy which is a variant of
the TMHP-based policy in~\cite{SDB-SLS-FB:08v}, and lower bound its
capture fraction in the limit of low demand speed and high demand
arrival rate.

\subsection{Capture Fraction Upper Bound}

The following theorem upper bounds the capture fraction of every
policy for the case of $v<1$.

\begin{theorem}[Capture fraction upper bound,
  \cite{SDB-SLS-FB:08v}]
  \label{thm:lower_v_upper}
  If $v< 1$, then for every causal policy $P$
\[
\capfrac(P) \leq \min\left\{1,\frac{2}{\sqrt{v \lambda
      W}}\right\}.
\]
\end{theorem}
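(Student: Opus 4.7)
The plan is to bound from below the length of any vehicle trajectory that captures a given fraction of demands, using the TMHP machinery of Section~\ref{sec:TMHP_rev}, and to compare this with the unit-speed path budget over a long horizon. I would fix an arbitrary causal policy $P$ and a horizon $T$, extend the arrival model as in Section~\ref{sec:prob_form} so that demands arriving in $[0,T]$ correspond to a Poisson set of initial positions uniformly distributed in $[0,W]\times[-vT,0]$ of expected size $\lambda T$, and let $\QQ_c$ denote the initial positions of the captured subset, of expected size $N_c=\capfrac(P)\lambda T$. Because the vehicle has unit speed, its trajectory on $[0,T]$ has length at most $T$ and is a feasible translational Hamiltonian path through $\QQ_c$ with some endpoints $\s,\f\in\env$. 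Lemma~\ref{lem:ttsp}, together with the observation that the affine correction $v(y_\f-y_\s)/(1-v^2)$ is bounded by $vL/(1-v^2)$ and so negligible as $T\to\infty$, would yield
\[
T \;\geq\; \emhp\bigl(\convertmap(\s),\convertmap(\QQ_c),\convertmap(\f)\bigr) - O(1).
\]
Under the transformation $\convertmap$ the extended arrival region becomes a rectangle of area $A_T = WvT/(1-v^2)^{3/2}$ in which the transformed arrivals remain uniformly Poisson with density $\rho=\lambda(1-v^2)^{3/2}/(vW)$.

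The main obstacle will be to lower bound the EMHP through $\convertmap(\QQ_c)$, because $\QQ_c$ is chosen \emph{adversarially} by the policy rather than being a uniform random subsample, so Corollary~\ref{cor:emhp_length} does not apply directly. The plan to handle this is a cell-decomposition argument: tile the transformed rectangle with subcells of area $\Theta(1/\rho)$; by Lemma~\ref{lem:poisson} the typical count of arrivals per subcell is $O(1)$, so even the densest attainable packing of $N_c$ captured points must occupy a region of area $\Omega(N_c/\rho)$, within which Corollary~\ref{cor:emhp_length} supplies a lower bound of order $\btsp\sqrt{N_c\cdot N_c/\rho} = \btsp N_c/\sqrt{\rho}$. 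This would show that for some constant $c$ comparable to $\btsp$,
\[
\emhp(\convertmap(\QQ_c)) \;\geq\; c\,N_c/\sqrt{\rho}.
\]

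Substituting the values of $\rho$ and $N_c$ and combining with the trajectory bound gives, in the limit $T\to\infty$, $1 \geq c\,\capfrac(P)\sqrt{\lambda vW/(1-v^2)^{3/2}}$. Rearranging and using $(1-v^2)^{3/4}\leq 1$ yields $\capfrac(P)\leq c'/\sqrt{v\lambda W}$ for a constant $c'\leq 1/\btsp\approx 1.41$, which is comfortably at most $2$. Combining with the trivial bound $\capfrac(P)\leq 1$ would conclude the proof. The delicate step throughout is the adversarial-subset EMHP lower bound; everything else is bookkeeping in the $(\s,\f,\convertmap)$ coordinate change.
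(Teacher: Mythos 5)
Your overall strategy (lower-bound the travel needed to capture a fraction of demands, compare with the unit-speed budget $T$) is sound, and the coordinate bookkeeping through $\convertmap$ is correct, but the step you yourself flag as delicate is where the argument actually breaks. Corollary~\ref{cor:emhp_length} is a statement about points that are \emph{independently and uniformly distributed} in a region; the captured set $\QQ_c$ is a policy-selected (hence adversarial, history-dependent) subset of the Poisson field, and your cell-decomposition repair ends by invoking Corollary~\ref{cor:emhp_length} again on the region that $\QQ_c$ ``occupies'' --- but the points inside that region are still the same non-uniform adversarial subset, so the corollary does not apply there either. Nor does ``$\QQ_c$ occupies area $\Omega(N_c/\rho)$'' by itself imply $\emhp(\convertmap(\QQ_c)) \geq \btsp\sqrt{N_c\cdot\area}$: a lower bound on the area spanned by a point set says nothing about the path length through it without additional control on separation (coincident or clustered points admit arbitrarily short paths relative to the BHH value). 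A cell argument \emph{can} be completed deterministically (e.g., discard the $o(N_c)$ points in over-occupied cells, then note that visiting $m$ pairwise-separated cells of side $s$ costs $\Omega(ms)$ via a $4$-coloring of the grid), but that route loses a multiplicative constant and gives no justification for your claimed constant $c\approx\btsp$; since the whole point of the theorem is the explicit factor $2$ in $2/\sqrt{v\lambda W}$, the constant cannot be waved through.

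This is also genuinely not the route the paper takes: Theorem~\ref{thm:lower_v_upper} is imported from \cite{SDB-SLS-FB:08v}, and the paper states that the cited proof ``relies on a computation of the expected minimum distance between demands.'' That nearest-neighbor argument sidesteps your difficulty entirely: any trajectory that captures a demand must, between consecutive captures, travel at least the distance from the current capture point to the nearest outstanding demand, and the expected nearest-neighbor distance in a planar Poisson field of intensity $\bar\lambda = \lambda/(vW)$ is $1/(2\sqrt{\bar\lambda})$. Summing over captures bounds the capture rate by $2\sqrt{\bar\lambda}$, and dividing by the arrival rate $\lambda$ gives exactly $\capfrac(P)\leq 2/\sqrt{v\lambda W}$ --- the constant $2$ emerging directly from the $1/(2\sqrt{\bar\lambda})$, with no appeal to $\btsp$, no BHH asymptotics, and no need to control adversarial subsets of the point process. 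I would recommend either adopting that argument or, if you want to keep the TMHP/EMHP framing, replacing the appeal to Corollary~\ref{cor:emhp_length} with the deterministic bound that a Hamiltonian path through $S$ has length at least $\sum_{q\in S} d_q - \max_q d_q$, where $d_q$ is the distance from $q$ to its nearest neighbor in the \emph{full} demand set; this reduces your key step to the same expected-minimum-distance computation.
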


The proof of the above theorem is contained in~\cite{SDB-SLS-FB:08v},
and relies on a computation of the expected minimum distance between
demands. Notice that for low demands speed, i.e., $v \ll 1$, it may be
possible to achieve a capture fraction of one, even for high
arrival rates.

\subsection{The TMHP-fraction Policy}

In Section~\ref{sec:TMHP_rev} we reviewed the translational minimum
Hamiltonian Path (TMHP) through a set of demands.  The following
policy utilizes this path to service demands.

\begin{algorithm}[H] 
  \dontprintsemicolon %
  \nocaptionofalgo %
  \KwAssumes{Vehicle is located on the line $y = L/2$.} %
  Compute a translational minimum Hamiltonian path through all
  outstanding demands in $[0,W]\times[0,L/2]$, starting at the service
  vehicle position, and terminating at the demand with the lowest
  $y$-coordinate.\;%
  \eIf{\textup{time to travel entire path is less than $L/(2v)$}}%
  {%
    Service all outstanding demands by following the computed
    path. \; %
  }{%
    Service outstanding demands along the computed path for $L/(2v)$
    time units. \; %
  }%
  Repeat. %
  \caption{\bf The TMHP-fraction (TF) policy}
\end{algorithm}

Figure~\ref{fig:TMHP_fraction} shows an example of the TMHP-fraction
policy.  In contrast with the LP policy, where the vehicle remains on
the deadline, in the TMHP-fraction policy the vehicle follows the TMHP
using minimum time motion between demands as described in
Section~\ref{sec:TMHP_rev}.
\begin{figure}
\includegraphics[width=.45\linewidth]{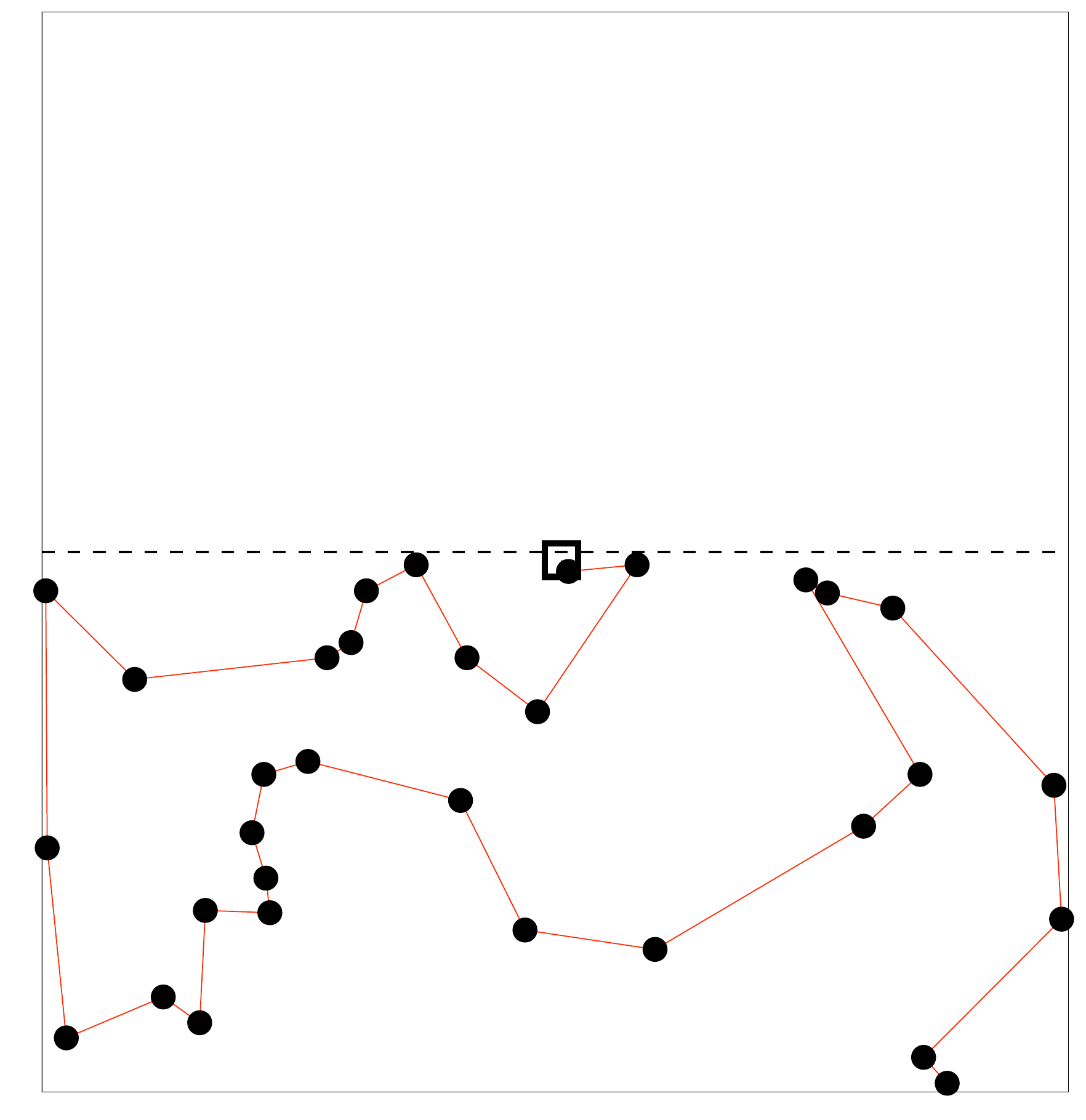}
\hfill
\includegraphics[width=.45\linewidth]{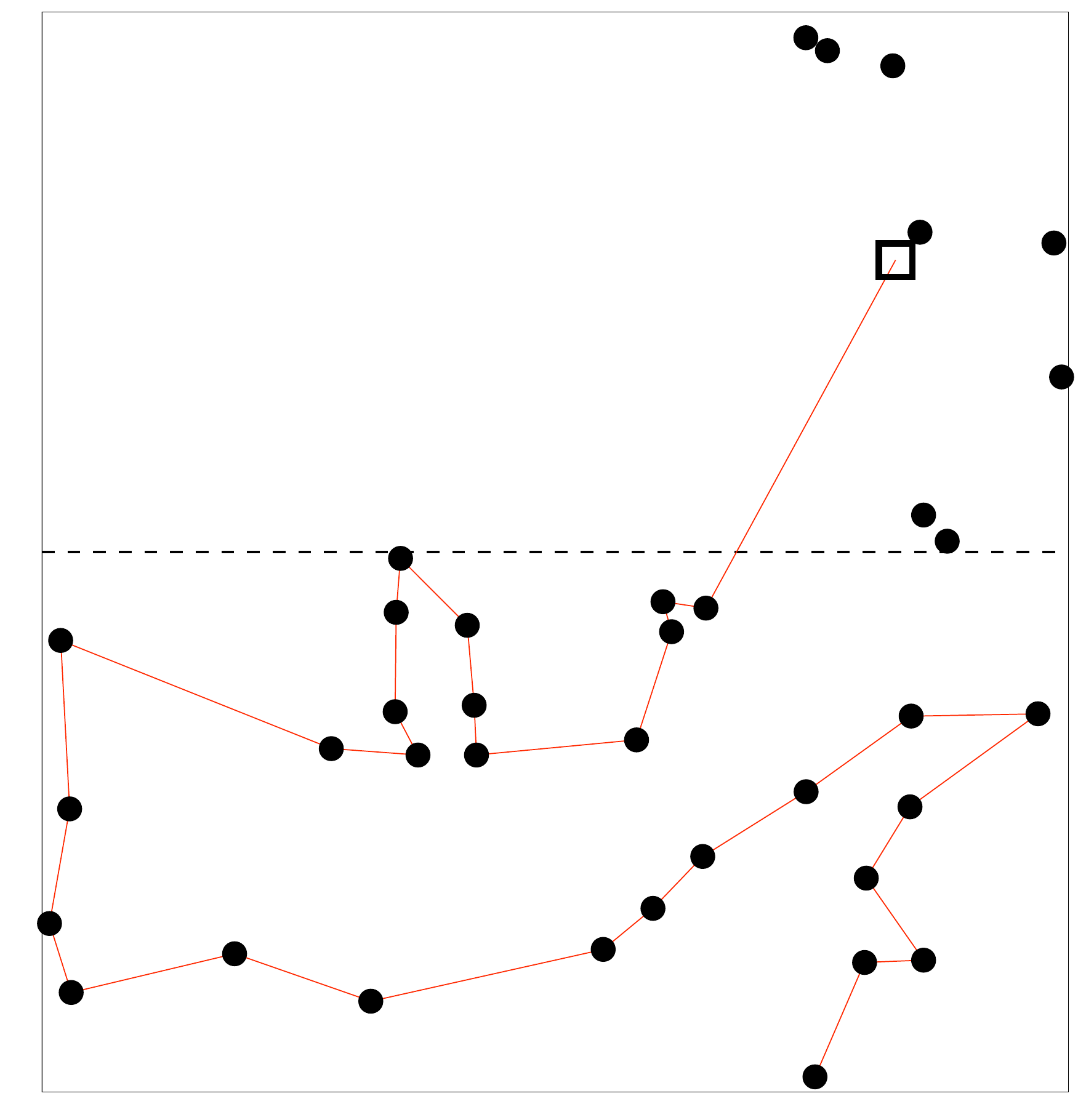}
\centering 
\caption{The TMHP-fraction policy.  The left-hand figure shows a TMHP
  through all outstanding demands.  The right-figure shows the instant
  when the vehicle has followed the path for $L/(2v)$ time units and
  recomputes its path, allowing some demands to escape.}
\label{fig:TMHP_fraction}
\end{figure}
Notice that none of the demands in the region $[0,W]\times [0,L/2]$ at
time $t$ will have escaped before time $t+ L/(2v)$.  Thus, the vehicle
is guaranteed that for the first $L/(2v)$ time units, all demands on
the TMHP path are still in the environment.  For the TMHP-fraction
policy we have the following result.

\begin{theorem}[TMHP-fraction policy lower bound]
  \label{thm:TMHP_lower_bd}
  In the limit as $v\to 0^+$ and $\lambda \to +\infty$, the capture
  fraction of the TMHP-fraction policy satisfies
\[
\capfrac(\mathrm{TF}) \geq \min\left\{1,\frac{1}{\btsp
    \sqrt{v \lambda W}}\right\}.
\]
\end{theorem}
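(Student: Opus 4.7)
The plan is to analyze the TMHP-fraction policy in its steady state, where the vehicle runs cycles of duration at most $L/(2v)$. In each cycle I would (i) estimate the number of outstanding demands in the service region $[0,W]\times[0,L/2]$, (ii) estimate the time required to traverse the TMHP through them, and (iii) combine the two to compute the capture fraction per cycle.

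First I would argue that in steady state every demand resides in $[0,W]\times[0,L/2]$ for at most $L/(2v)$ time units before either being captured or crossing $y=L/2$, so by Lemma~\ref{lem:poisson} combined with Little's law the expected number $n$ of outstanding demands in this region is asymptotic to $\lambda L/(2v)$ as $\lambda\to+\infty$, and conditionally on $n$ they are independently and uniformly distributed. Next, by Lemma~\ref{lem:ttsp} the TMHP length equals the EMHP length in the transformed region $[0,W/\sqrt{1-v^2}]\times[0,L/(2(1-v^2))]$ plus a correction of order $vL$; in the limit $v\to 0^+$ the transformed region has area $WL/2$ and the correction vanishes, so Corollary~\ref{cor:emhp_length} yields $\tmhp_v(\s,\QQ,\f)\to T(n):=\btsp\sqrt{n\,WL/2}$ as $n\to+\infty$.

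If $T(n)\le L/(2v)$, which after substituting $n=\lambda L/(2v)$ is equivalent to $\btsp\sqrt{v\lambda W}\le 1$, the vehicle captures every outstanding demand in a single cycle and the capture fraction equals $1$. Otherwise the vehicle traces the first $L/(2v)$ time units of the TMHP; because the demands are uniformly distributed the asymptotic spacing between consecutive demands along the TMHP is $\btsp\sqrt{WL/(2n)}$, so the number captured per cycle is asymptotically
\[
k=\frac{L/(2v)}{\btsp\sqrt{WL/(2n)}}.
\]
Plugging in $n=\lambda L/(2v)$ and dividing by the $\lambda L/(2v)$ arrivals per cycle, a direct simplification gives $k/(\lambda L/(2v))=1/(\btsp\sqrt{v\lambda W})$, which matches the second branch of the stated minimum.

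The main technical obstacle is justifying step (iii) in the second case: one must show that traversing a fraction of the TMHP time captures a proportional fraction of the demands, which requires a concentration-type refinement of Corollary~\ref{cor:emhp_length} — namely, that the EMHP through a random subset of $k$ out of $n$ uniform points has length $\btsp\sqrt{k\,WL/2}$ — together with control of the fluctuations of $n$ around its mean and of boundary effects. Both are enabled by the joint limit $v\to 0^+$, $\lambda\to+\infty$, which forces $n\to+\infty$ and makes the asymptotic TMHP formulas applicable almost surely.
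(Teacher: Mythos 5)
Your argument is essentially the paper's own proof: the paper likewise takes the number $N$ of demands in $[0,W]\times[0,L/2]$ to be Poisson with parameter $\lambda L/(2v)$, sets the TMHP length to $\btsp\sqrt{NWL/2}$ via Lemma~\ref{lem:ttsp} and Corollary~\ref{cor:emhp_length}, concludes that a fraction $c=\min\{1,\sqrt{L}/(\btsp v\sqrt{2NW})\}$ of the demands is captured (algebraically identical to your spacing computation), controls the fluctuations of $N$ with Chebyshev, and treats the shorter-previous-iteration case as only having fewer demands. The ``proportional capture along a partial traversal'' step that you flag as the main technical obstacle is asserted without further justification in the paper as well, so your proposal matches both the structure and the level of rigor of the published argument.
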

\begin{proof}
  Consider the beginning of an iteration of the policy, and assume
  that the duration of the previous iteration was $L/(2v)$.  In this
  case, the vehicle has $y$-coordinate $Y\in[L/2,L]$, and by
  Lemma~\ref{lem:poisson}, the region $\RR:=[0,W]\times[0,L/2]$
  contains a number of demands $N$ that is Poisson distributed with
  parameter $\lambda L/(2v)$.  Conditioned on $N$, the demands are
  independently and uniformly distributed in $\RR$.

  Now, we make use of the following three facts. First, as $v\to 0^+$,
  the length of the TMHP constrained to start at the vehicle location
  and end at the lowest demand, is equal to the length of the EMHP in
  the corresponding static instance, as described in
  Lemma~\ref{lem:ttsp}. Second, from Corollary~\ref{cor:emhp_length},
  for uniformly distributed points, the asymptotic length of a
  constrained EMHP is equal to the asymptotic length of the ETSP tour.
  Third, as $v\to 0^+$, and $\lambda\to +\infty$, we have that $N$
  tends to $+\infty$ with probability one.  Using the above facts we
  obtain that the length of the TMHP starting at the vehicle position,
  passing through all demands in $\RR$, and terminating at the demand
  with the lowest $y$-coordinate, has length $\btsp\sqrt{NWL/2}$ in
  the limiting regime as $v\to 0^+$, and $\lambda\to +\infty$.

  The vehicle will follow the TMHP for at most $L/(2v)$ time units,
  and thus will service $cN$ demands, where
  \[
  c = \min\left\{1,\frac{\sqrt{L}}{\btsp v \sqrt{2 N W}}\right\}.
  \]
  
  Now, the random variable $N$ has expected value $\expectation{N} =
  \lambda L/(2v)$ and variance $\sigma^2_N = \lambda L/(2v)$.  By the
  Chebyshev inequality, 
  $\prob[|N-\expectation{N}| \geq \alpha] \leq \sigma^2_N/\alpha^2$,
  and thus letting $\alpha = \sqrt{v}\expectation{N}$, we have
  \[
  \prob[N \geq (1+\sqrt{v})\expectation{N}] \leq
  \frac{1}{v\expectation{N}} = \frac{2}{\lambda L}.
  \]
  Thus, we have
  \[
  c \geq \min\left\{1,\frac{1}{\btsp \sqrt{(1+\sqrt{v})v\lambda W}}\right\}.
  \]
  with probability at least $1-2/(\lambda L)$. In the limit as $\lambda
  \to +\infty$, with probability 1,
  \begin{equation}
    \label{eq:c_frac}
  c \geq \min\left\{1,\frac{1}{\btsp \sqrt{v\lambda W}}\right\}.
  \end{equation}
  Therefore, if the previous iteration had duration at least $L/(2v)$,
  then the total fraction of demands captured in the current iteration
  is given by equation~(\ref{eq:c_frac}).

  The other case is that the previous iteration had duration $T <
  L/(2v)$.  In this case, all outstanding demands in the region
  $\RR:=[0,W]\times[0,L/2]$ lie in a subset $[0,W]\times[0,vT]$, and
  the subset contains a number of demands $N$ that is Poisson
  distributed with parameter $\lambda T \leq \lambda L/(2v)$.  Thus,
  in this case there are fewer outstanding demands, and the bound on
  $c$ still holds.  Thus, $\capfrac(\mathrm{TF}) \geq c$, and we
  obtain the desired result.
\end{proof}

\begin{remark}[Bound comparison]
  In the limit as $v\to 0^+$, and $\lambda \to +\infty$, the capture
  fraction of the TMHP-fraction policy is within a factor of $2\btsp
  \approx 1.42$ of the optimal. \oprocend
\end{remark}

\section{Simulations}
\label{sec:simu}

We now present two sets of results from numerical experiments.  The
first set compares the Longest Path policy with $\eta =1$ to the
Non-causal Longest Path policy and to the theoretical lower bound in
Theorem~\ref{thm:LP_lower_bd}.  The second set compares the
TMHP-fraction policy to the policy independent upper bound in
Theorem~\ref{thm:lower_v_upper} and the lower bound in
Theorem~\ref{thm:TMHP_lower_bd}.

To simulate the LP and the NCLP policies, we perform $10$ runs of the
policy, where each run consists of $5000$ demands.  A comparison of
the capture fractions for the two policies is presented in
Figure~\ref{fig:LP_simu}.  When $L >vW$, the capture fraction of the
LP policy is nearly identical to that of the NCLP policy. Even in
Figure~\ref{fig:LP_simu_2}, where $L <vW$, the capture fraction of the
LP policy is within $2\%$ of the NCLP policy, and thus the optimal.
This suggests that the Longest Path policy is essentially optimal over
a large range of parameter values.
\begin{figure}
\centering
\subfigure[$v=2$ and $L > vW$.]{
\includegraphics[width=0.465\linewidth]{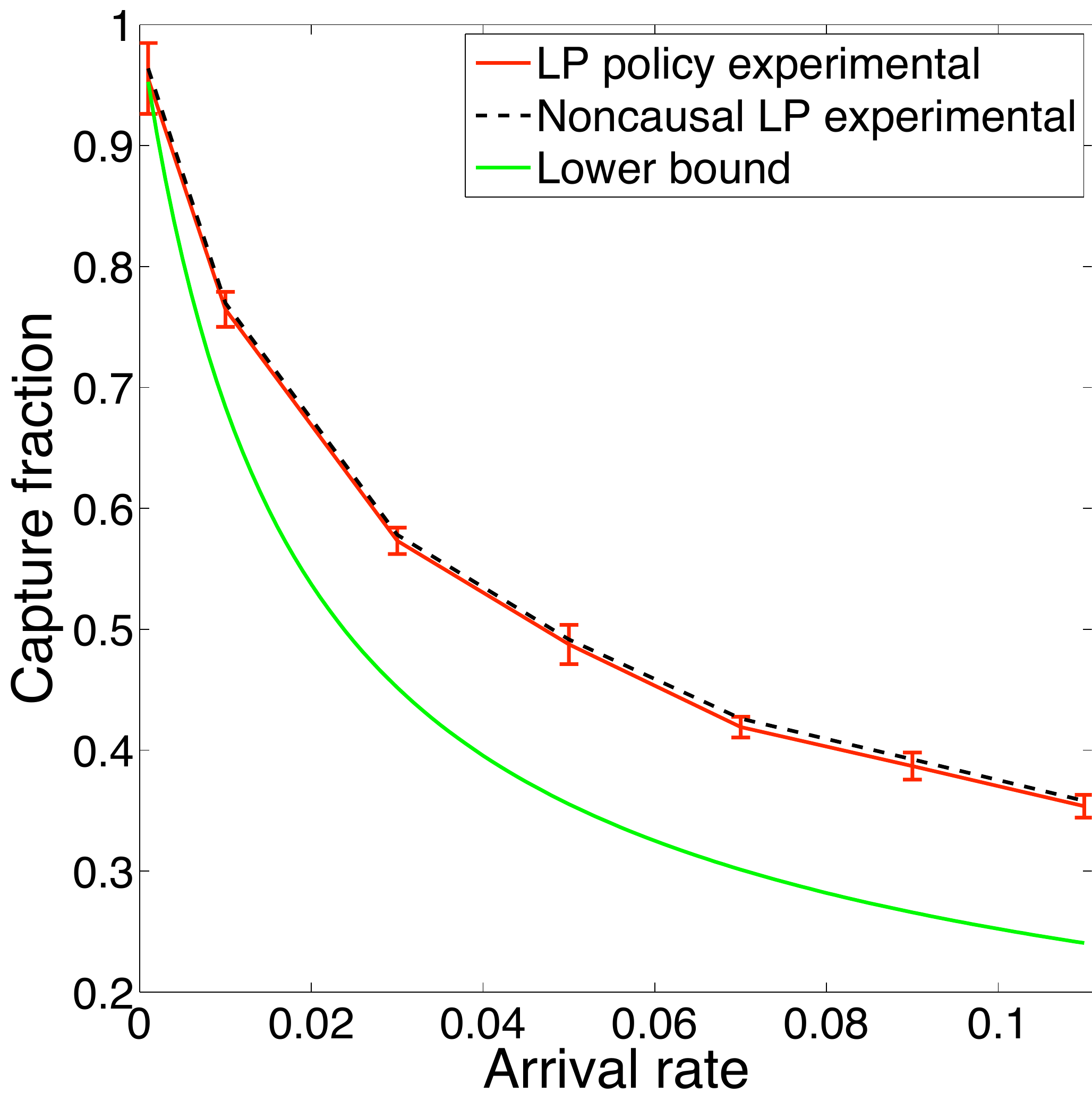}
\label{fig:LP_simu_2}
}
\subfigure[$v = 5$ and $L < vW$.]{
\includegraphics[width=0.465\linewidth]{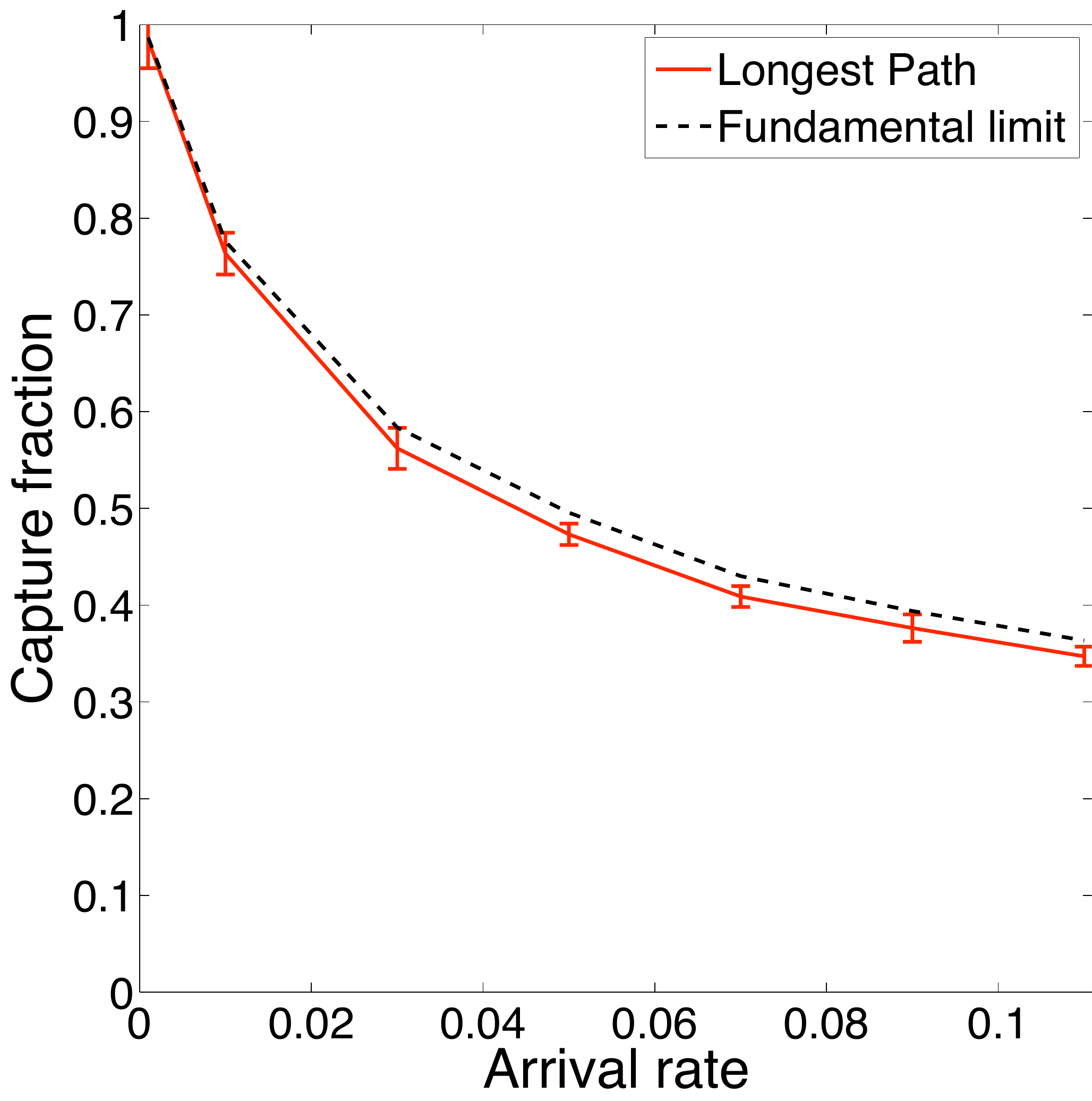}
\label{fig:LP_simu_5}
}
\caption{Simulation results for LP policy (solid red line with error
  bars showing $\pm$ one standard deviation) and the NCLP policy
  (dashed black line) for an environment of width $W=120$ and length
  $L=500$.  In (a), $L > vW$, and the lower bound in
  Theorem~\ref{thm:LP_lower_bd} is shown in solid green.}
\label{fig:LP_simu}
\end{figure}

To simulate the TMHP-fraction policy, the {\ttfamily
  linkern}\footnote{ {\ttfamily linkern} is freely available for
  academic research use at {\ttfamily
    http://www.tsp.gatech.edu/concorde.html}.} solver is used to
generate approximations to the optimal TMHP. For each value of arrival
rate, we determine the capture fraction by taking the mean over $10$
runs of the policy. A comparison of the simulation results with the
theoretical results from Section~\ref{sec:tmhp} are presented in
Figure~\ref{fig:TMHP_simu}.
\begin{figure}
\centering
\subfigure[Demand speed $v=0.01$.]{
\includegraphics[width=0.465\linewidth]{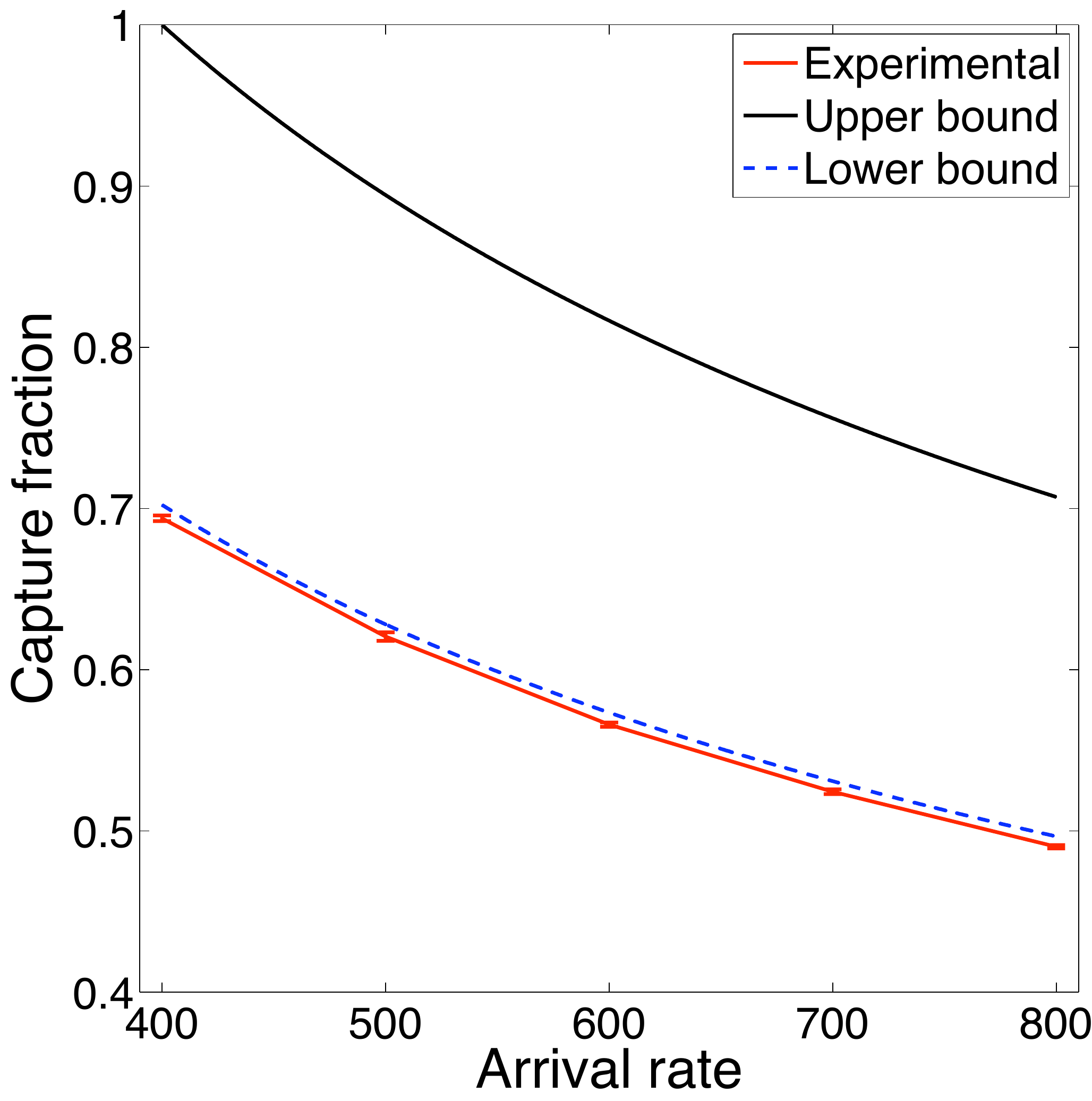}
\label{fig:TMHP_simu_01}
}
\subfigure[Demand speed $v=0.05$.]{
\includegraphics[width=0.465\linewidth]{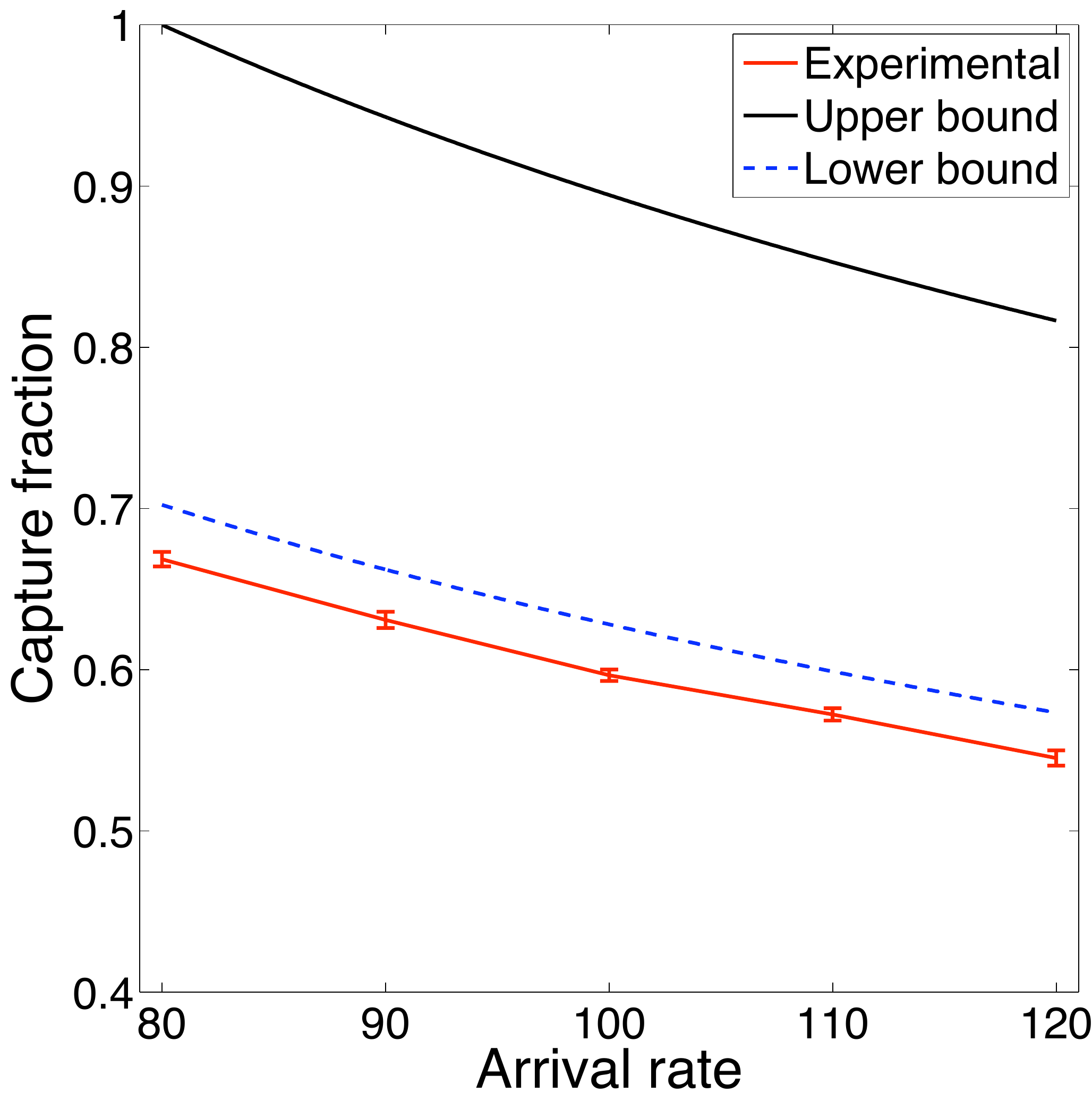}
\label{fig:TMHP_simu_05}
}
\caption{Simulation results for TMHP-fraction policy.  The solid black
  curve shows the upper bound in Theorem~\ref{thm:lower_v_upper} and
  the dashed line shows the lower bound in
  Theorem~\ref{thm:TMHP_lower_bd}.  Numerical results are shown with
  error bars.}
\label{fig:TMHP_simu}
\end{figure}
For $v=0.01$ in Fig.~\ref{fig:TMHP_simu_01}, the experimental results
are in near exact agreement with the theoretical lower bound in
Theorem~\ref{thm:lower_v_upper}.  For $v=0.05$ in
Fig.~\ref{fig:TMHP_simu_05}, the experimental results are within $5\%$
of the theoretical lower bound.  However, notice that the experimental
capture fraction is smaller than the theoretical lower bound. This is
due to several factors.  First, we have not reached the limit as $v\to
0^+$ and $\lambda \to +\infty$ where the asymptotic value of $\btsp
\approx 0.712$ holds.  Second, we are using an approximate solution to
the optimal TMHP, generated via the linkern solver.

\section{Conclusions}
\label{sec:conclusions}

In this paper we introduced a pursuit problem in which a vehicle must
guard a deadline from approaching demands. We presented novel
policies in the case when the demand speed is greater than the vehicle
speed, and in the case when the demand speed is less than the vehicle
speed. In the former case we introduced the Longest Path policy which
is based on computing longest paths in the directed acyclic
reachability graph, and in the latter case we introduced the
TMHP-fraction policy. For each policy, we analyzed the fraction of
demands that are captured.

There are many areas for future work. The Longest Path policy has
promising extensions to the case when demands have different priority
levels, and to the case of multiple vehicles. We would also like to
fully characterize the capture fraction when $L < vW$, and tighten
our existing bounds to reflect the near optimal performance shown in
simulation.

\end{document}